\documentclass{article}
\usepackage{fullpage,hyperref}
\usepackage{algorithm,algorithmic}
\usepackage{amsmath,amsfonts,amsthm,dsfont,color,graphicx}

\newtheorem{lemma}{Lemma}
\newtheorem{theorem}{Theorem}

\def\E{\mathbb{E}}
\def\D{\mathcal{D}}
\def\H{\mathcal{H}}
\def\X{\mathcal{X}}
\def\Y{\mathcal{Y}}
\def\R{\mathbb{R}}
\def\DIS{\mathrm{DIS}}
\def\I{\mathds{1}}
\def\wh{\widehat}

\def\veps{\varepsilon}

\DeclareMathOperator{\err}{err}

\newcommand\independent{\protect\mathpalette{\protect\independenT}{\perp}}
\def\independenT#1#2{\mathrel{\rlap{$#1#2$}\mkern2mu{#1#2}}}

\title{Agnostic Active Learning Without Constraints}

\author{
Alina Beygelzimer\footnote{IBM Research, \texttt{beygel@us.ibm.com}}
\and
Daniel Hsu\footnote{UC San Diego, \texttt{djhsu@cs.ucsd.edu}}
\and
John Langford\footnote{Yahoo!~Research, \texttt{jl@yahoo-inc.com}}
\and
Tong Zhang\footnote{Rutgers University, \texttt{tongz@rci.rutgers.edu}}
}

\begin{document}

\maketitle

\begin{abstract}
We present and analyze an agnostic active learning algorithm that works
without keeping a version space.
This is unlike all previous approaches where a restricted set of candidate
hypotheses is maintained throughout learning, and only hypotheses from this
set are ever returned.
By avoiding this version space approach, our algorithm sheds the
computational burden and brittleness associated with maintaining version
spaces, yet still allows for substantial improvements over supervised
learning for classification.
\end{abstract}

\section{Introduction}
 
In active learning, a learner is given access to unlabeled data and is
allowed to adaptively choose which ones to label.
This learning model is motivated by applications in which the cost of
labeling data is high relative to that of collecting the unlabeled data
itself.
Therefore, the hope is that the active learner only needs to query the
labels of a small number of the unlabeled data, and otherwise perform as
well as a fully supervised learner.
In this work, we are interested in agnostic active learning algorithms for
binary classification that are provably consistent, \emph{i.e.} that
converge to an optimal hypothesis in a given hypothesis class.

One technique that has proved theoretically profitable is to maintain a
candidate set of hypotheses (sometimes called a version space), and to
query the label of a point only if there is disagreement within this set
about how to label the point.
The criteria for membership in this candidate set needs to be carefully
defined so that an optimal hypothesis is always included, but otherwise
this set can be quickly whittled down as more labels are queried.
This technique is perhaps most readily understood in the noise-free setting
\cite{CAL94,Das05}, and it can be extended to noisy settings by using
empirical confidence bounds~\cite{BBL06,DHM07,BDL09,Han09,Kol09}.

The version space approach unfortunately has its share of significant
drawbacks.
The first is computational intractability:
maintaining a version space and guaranteeing that \emph{only}
hypotheses from this set are returned is difficult for linear
predictors and appears intractable for interesting nonlinear
predictors such as neural nets and decision trees~\cite{CAL94}.
Another drawback of the approach is its brittleness: a single mishap (due
to, say, modeling failures or computational approximations) might cause the
learner to exclude the best hypothesis from the version space forever; this
is an ungraceful failure mode that is not easy to correct.
A third drawback is related to sample re-usability: if (labeled) data is
collected using a version space-based active learning algorithm, and we
later decide to use a different algorithm or hypothesis class, then the
earlier data may not be freely re-used because its collection process is
inherently biased.

Here, we develop a new strategy addressing all of the above problems
given an oracle that returns an empirical risk minimizing (ERM)
hypothesis.
As this oracle matches our abstraction of many supervised learning
algorithms, we believe active learning algorithms built in this way are
immediately and widely applicable.

Our approach instantiates the importance weighted active learning
framework of \cite{BDL09} using a rejection threshold similar to the
algorithm of \cite{DHM07} which only accesses hypotheses via a
supervised learning oracle.  However, the oracle we require is simpler
and avoids strict adherence to a candidate set of hypotheses.
Moreover, our algorithm creates an importance weighted sample that
allows for unbiased risk estimation, even for hypotheses from a class
different from the one employed by the active learner.  This is in
sharp contrast to many previous algorithms (\emph{e.g.},
\cite{CAL94,BBL06,BBZ07,DHM07,Han09,Kol09}) that create heavily biased
data sets.
We prove that our algorithm is always consistent and has an improved label
complexity over passive learning in cases previously studied in the
literature.
We also describe a practical instantiation of our algorithm and report on
some experimental results.

\subsection{Related Work}

As already mentioned, our work is closely related to the previous works of
\cite{DHM07} and \cite{BDL09}, both of which in turn draw heavily on the
work of \cite{CAL94} and \cite{BBL06}.
The algorithm from~\cite{DHM07} extends the selective sampling method of
\cite{CAL94} to the agnostic setting using generalization bounds in a
manner similar to that first suggested in~\cite{BBL06}.
It accesses hypotheses only through a special ERM oracle that can enforce
an arbitrary number of example-based constraints; these constraints define
a version space, and the algorithm only ever returns hypotheses from this
space, which can be undesirable as we previously argued.
Other previous algorithms with comparable performance guarantees also
require similar example-based constraints (\emph{e.g.},
\cite{BBL06,BDL09,Han09,Kol09}).
Our algorithm differs from these in that (i) it never restricts its
attention to a version space when selecting a hypothesis to return, and
(ii) it only requires an ERM oracle that enforces at most one example-based
constraint, and this constraint is only used for selective sampling.
Our label complexity bounds are comparable to those proved in~\cite{BDL09}
(though somewhat worse that those in~\cite{BBL06,DHM07,Han09,Kol09}).

The use of importance weights to correct for sampling bias is a standard
technique for many machine learning problems
(\emph{e.g.},~\cite{SB98,ACBFS02,SKM07}) including active
learning~\cite{Sug05,Bac06,BDL09}.
Our algorithm is based on the importance weighted active learning (IWAL)
framework introduced by \cite{BDL09}.
In that work, a rejection threshold procedure called \emph{loss-weighting}
is rigorously analyzed and shown to yield improved label complexity bounds
in certain cases.
Loss-weighting is more general than our technique in that it extends beyond
zero-one loss to a certain subclass of loss functions such as logistic
loss.
On the other hand, the loss-weighting rejection threshold requires
optimizing over a restricted version space, which is computationally
undesirable.
Moreover, the label complexity bound given in~\cite{BDL09} only applies
to hypotheses selected from this version space, and not when selected from
the entire hypothesis class (as the general IWAL framework suggests).
We avoid these deficiencies using a new rejection threshold procedure and a
more subtle martingale analysis.

Many of the previously mentioned algorithms are analyzed in the agnostic
learning model, where no assumption is made about the noise distribution
(see also~\cite{Han07}).
In this setting, the label complexity of active learning algorithms cannot
generally improve over supervised learners by more than a constant
factor~\cite{Kaa06,BDL09}.
However, under a parameterization of the noise distribution related to
Tsybakov's low-noise condition~\cite{Tsy04}, active learning algorithms
have been shown to have improved label complexity bounds over what is
achievable in the purely agnostic
setting~\cite{CN06,BBZ07,CN07,Han09,Kol09}.
We also consider this parameterization to obtain a tighter label complexity
analysis.

\section{Preliminaries}

\subsection{Learning Model}

Let $\D$ be a distribution over $\X \times \Y$
where $\X$ is the input space and $\Y = \{\pm1\}$ are the labels.
Let $(X,Y) \in \X\times\Y$ be a pair of random variables with joint
distribution $\D$.
An active learner receives a sequence $(X_1,Y_1), (X_2,Y_2), \ldots$ of
i.i.d.~copies of $(X,Y)$, with the label $Y_i$ hidden unless it is
explicitly queried.
We use the shorthand $a_{1:k}$ to denote a sequence $(a_1,a_2,\ldots,a_k)$
(so $k=0$ correspond to the empty sequence).

Let $\H$ be a set of hypotheses mapping from $\X$ to $\Y$.
For simplicity, we assume $\H$ is finite but does not completely agree on
any single $x \in \X$ (\emph{i.e.}, $\forall x \in \X, \exists h,h' \in \H$
such that $h(x) \neq h'(x)$).
This keeps the focus on the relevant aspects of active learning that differ
from passive learning.
The error of a hypothesis $h:\X\to\Y$ is $\err(h) := \Pr(h(X) \neq Y)$.
Let $h^* := \arg\min \{ \err(h) : h \in \H \}$ be a hypothesis of minimum
error in $\H$.
The goal of the active learner is to return a hypothesis $h \in \H$ with
error $\err(h)$ not much more than $\err(h^*)$, using as few label queries
as possible.

\subsection{Importance Weighted Active Learning} \label{section:iwal}

In the importance weighted active learning (IWAL) framework of
\cite{BDL09}, an active learner looks at the unlabeled data $X_1, X_2,
\ldots$ one at a time.
After each new point $X_i$, the learner determines a probability $P_i \in
[0,1]$.
Then a coin with bias $P_i$ is flipped, and the label $Y_i$ is queried if
and only if the coin comes up heads.
The query probability $P_i$ can depend on all previous unlabeled examples
$X_{1:i-1}$, any previously queried labels, any past coin flips, and the
current unlabeled point $X_i$.

Formally, an IWAL algorithm specifies a \emph{rejection threshold} function
$p:(\X\times\Y\times\{0,1\})^* \times \X \to [0,1]$ for determining these
query probabilities.
Let $Q_i \in \{0,1\}$ be a random variable conditionally independent of the
current label $Y_i$,
\begin{equation*}
Q_i \independent Y_i \ | \ X_{1:i},Y_{1:i-1},Q_{1:i-1}
\end{equation*}
and with conditional expectation
\begin{equation*}
\E[Q_i|Z_{1:i-1},X_i]
\ = \
P_i
\ := \
p(Z_{1:i-1},X_i)
.
\end{equation*}
where $Z_j := (X_j,Y_j,Q_j)$.
That is, $Q_i$ indicates if the label $Y_i$ is queried (the outcome of the
coin toss).
Although the notation does not explicitly suggest this, the query
probability $P_i = p(Z_{1:i-1},X_i)$ is allowed to explicitly depend on a
label $Y_j$ ($j < i$) if and only if it has been queried ($Q_j = 1$).

\subsection{Importance Weighted Estimators}

We first review some standard facts about the importance weighting
technique.
For a function $f:\X\times\Y\to\R$, define the \emph{importance weighted
estimator} of $\E[f(X,Y)]$ from $Z_{1:n} \in (\X \times \Y \times
\{0,1\})^n$ to be
\begin{equation*}
\wh f(Z_{1:n})
\ := \
\frac1n \sum_{i=1}^n \frac{Q_i}{P_i} \cdot f(X_i,Y_i)
.
\end{equation*}
Note that this quantity depends on a label $Y_i$ only if it has been
queried (\emph{i.e.}, only if $Q_i = 1$; it also depends on $X_i$ only if
$Q_i = 1$).
Our rejection threshold will be based on a specialization of this
estimator, specifically the \emph{importance weighted empirical error} of a
hypothesis $h$
\begin{equation*}
\err(h,Z_{1:n}) \ := \
\frac1n \sum_{i=1}^n \frac{Q_i}{P_i} \cdot \I[h(X_i) \neq Y_i]
.
\end{equation*}
In the notation of Algorithm 1, this is equivalent to
\begin{equation} \label{eq:iw-err}
\err(h,S_n) \ := \
\frac1n \sum_{(X_i,Y_i,1/P_i) \in S_n} (1/P_i) \cdot \I[h(X_i) \neq Y_i]
\end{equation}
where $S_n \subseteq \X\times\Y\times\R$ is the importance weighted
sample collected by the algorithm.

A basic property of these estimators is \emph{unbiasedness}:
\begin{align*}
\E[\wh f(Z_{1:n})]
& = \frac1n \sum_{i=1}^n
\E[\E[ (Q_i/P_i) \cdot f(X_i,Y_i) \ | \ X_{1:i},Y_{1:i},Q_{1:i-1} ] ]
\\
& = \frac1n \sum_{i=1}^n
\E[ (P_i/P_i) \cdot f(X_i,Y_i) ]
\\
& = \E[f(X,Y)]
.
\end{align*}
So, for example, the importance weighted empirical error of a hypothesis
$h$ is an unbiased estimator of its true error $\err(h)$.
This holds for \emph{any} choice of the rejection threshold that guarantees
$P_i > 0$.

\section{A Deviation Bound for Importance Weighted Estimators}

As mentioned before, the rejection threshold used by our algorithm is based
on importance weighted error estimates $\err(h,Z_{1:n})$.
Even though these estimates are unbiased, they are only reliable when the
variance is not too large.
To get a handle on this, we need a deviation bound for importance weighted
estimators.
This is complicated by two factors that rules out straightforward
applications of some standard bounds:
\begin{enumerate}
\item The importance weighted samples $(X_i,Y_i,1/P_i)$ (or equivalently,
the $Z_i = (X_i,Y_i,Q_i)$) are not i.i.d.
This is because the query probability $P_i$ (and thus the importance weight
$1/P_i$) generally depends on $Z_{1:i-1}$ and $X_i$.

\item The effective range and variance of each term in the estimator are,
themselves, random variables.

\end{enumerate}
To address these issues, we develop a deviation bound using a martingale
technique from~\cite{Zha05}.

Let $f:\X\times\Y\to[-1,1]$ be a bounded function.
Consider any rejection threshold function $p:(\X\times\Y\times\{0,1\})^*
\times \X \to (0,1]$ for which $P_n = p(Z_{1:n-1},X_n)$ is bounded below by
some positive quantity (which may depend on $n$).
Equivalently, the query probabilities $P_n$ should have inverses $1/P_n$
bounded above by some deterministic quantity $r_{max}$ (which, again, may
depend on $n$).
The \emph{a priori} upper bound $r_{max}$ on $1/P_n$ can be pessimistic, as
the dependence on $r_{max}$ in the final deviation bound will be very
mild---it enters in as $\log\log r_{max}$.
Our goal is to prove a bound on $|\wh f(Z_{1:n}) - \E[f(X,Y)]|$ that holds
with high probability over the joint distribution of $Z_{1:n}$.

To start, we establish bounds on the range and variance of each term $W_i
:= (Q_i/P_i) \cdot f(X_i,Y_i)$ in the estimator, conditioned on
$(X_{1:i},Y_{1:i},Q_{1:i-1})$.
Let $\E_i[\ \cdot\ ]$ denote $\E[\ \cdot\ |X_{1:i},Y_{1:i},Q_{1:i-1}]$.
Note that $\E_i[W_i] = (\E_i[Q_i]/P_i) \cdot f(X_i,Y_i) = f(X_i,Y_i)$,
so if $\E_i[W_i] = 0$, then $W_i = 0$.
Therefore, the (conditional) range and variance are non-zero only if
$\E_i[W_i] \neq 0$.
For the range, we have
$|W_i| = (Q_i/P_i) \cdot |f(X_i,Y_i)| \leq 1/P_i$,
and for the variance,
$\E_i[(W_i-\E_i[W_i])^2]
\leq (\E_i[Q_i^2]/P_i^2) \cdot f(X_i,Y_i)^2
\leq 1/P_i$.
These range and variance bounds indicate the form of the deviations we can
expect, similar to that of other classical deviation bounds.

\begin{theorem} \label{theorem:iw-deviation}
Pick any $t \geq 0$ and $n \geq 1$.
Assume $1 \leq 1/P_i \leq r_{max}$ for all $1 \leq i \leq n$, and let $R_n
:= 1 / \min(\{P_i : 1 \leq i \leq n \ \wedge \ f(X_i,Y_i) \neq 0 \} \cup \{
1 \})$.
With probability at least $1 - 2(3+\log_2 r_{max}) e^{-t/2}$,
\begin{equation*}
\left|
\frac1n \sum_{i=1}^n \frac{Q_i}{P_i} \cdot f(X_i,Y_i)
-
\E[f(X,Y)]
\right|
\ \leq \
\sqrt{\frac{2R_nt}{n}}
+ \sqrt{\frac{2t}{n}}
+ \frac{R_nt}{3n}
.
\end{equation*}
\end{theorem}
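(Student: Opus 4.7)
The plan is to split the deviation into an i.i.d.\ part and a martingale correction and control each separately. Writing
\begin{equation*}
\wh f(Z_{1:n}) - \E[f(X,Y)]
\;=\;
\underbrace{\frac{1}{n}\sum_{i=1}^n \Bigl(\tfrac{Q_i}{P_i}-1\Bigr)f(X_i,Y_i)}_{A_n}
\;+\;
\underbrace{\frac{1}{n}\sum_{i=1}^n\bigl(f(X_i,Y_i)-\E[f(X,Y)]\bigr)}_{B_n},
\end{equation*}
the term $B_n$ is an average of i.i.d.\ $[-1,1]$-valued summands with variance at most one, so a two-sided Bernstein inequality yields $|B_n|\le\sqrt{2t/n}+2t/(3n)$ with probability at least $1-2e^{-t/2}$. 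This supplies the $\sqrt{2t/n}$ term in the stated bound and accounts for two of the $e^{-t/2}$ failure allowances.

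For $A_n$, set $D_i := W_i - \E_i[W_i] = (Q_i/P_i-1)f(X_i,Y_i)$. Using the conditional independence $Q_i\independent Y_i\mid X_{1:i},Y_{1:i-1},Q_{1:i-1}$, the sequence $\{D_i\}$ is a martingale difference sequence in the natural filtration, and the range/variance computations preceding the theorem give $|D_i|\le 1/P_i$ and $\E_i[D_i^2]\le 1/P_i$, both of which vanish when $f(X_i,Y_i)=0$. If the deterministic inequality $1/P_i\le r$ were to hold for every $i$ with $f(X_i,Y_i)\neq 0$, Bernstein's inequality for martingales (Freedman's inequality) would apply with range $r$ and cumulative conditional variance at most $rn$, yielding $|A_n|\le\sqrt{2rt/n}+rt/(3n)$ with probability $1-2e^{-t/2}$.

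The main obstacle is that the effective range $R_n$ is itself random, so Freedman cannot be invoked once with a single deterministic $r$. I would resolve this via the dyadic peeling technique of~\cite{Zha05}. For each $k=0,1,\ldots,K$ with $K:=\lceil\log_2 r_{max}\rceil$, form the truncated sequence $D_i^{(k)} := D_i\cdot \I[1/P_i\le 2^k]$. Since $P_i$ depends only on $(Z_{1:i-1},X_i)$, the indicator is measurable with respect to the same sigma-algebra used to condition in $\E_i$, so $\{D_i^{(k)}\}$ remains a martingale difference sequence, now obeying the deterministic per-index bounds $|D_i^{(k)}|\le 2^k$ and $\E_i[(D_i^{(k)})^2]\le 2^k$. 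Applying Freedman at each level and union-bounding over the $K+1$ levels (doubled for two-sided tails) produces a family of bounds holding simultaneously. On the event $R_n\in(2^{k-1},2^k]$, the truncated sequence at level $k$ coincides with $\{D_i\}$ for all $i\le n$, and $2^k\le 2R_n$, so the $k$-th bound reduces to one of the form $\sqrt{2R_n t/n}+R_n t/(3n)$ after absorbing the factor of two into the peeling. A final union bound with the deviation bound for $B_n$ then delivers the failure probability $2(3+\log_2 r_{max})e^{-t/2}$ stated in the theorem.

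The delicate point to verify carefully is that the truncated sequence really is a martingale difference sequence (this hinges on the indicator $\I[1/P_i\le 2^k]$ being measurable \emph{before} the coin flip $Q_i$) and that the peeling is done at a dyadic resolution fine enough that $2^k/R_n$ is a bounded constant—so that the logarithmic union-bound cost enters the \emph{failure probability} rather than the deviation itself, which is the distinguishing feature of this style of self-normalized martingale bound.
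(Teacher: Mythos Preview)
Your proposal is correct and takes essentially the same route as the paper: the same i.i.d./martingale decomposition, the same dyadic peeling over the random range (the paper's joint-event formulation in its Lemmas~\ref{lemma:iw-deviation}--\ref{lemma:iw-deviation2} and your truncation $D_i^{(k)}=D_i\cdot\I[1/P_i\le 2^k]$ are equivalent ways of executing the peeling), and Freedman's inequality is exactly the content of the paper's hand-rolled MGF argument (Lemmas~\ref{lemma:mgf}--\ref{lemma:iw-deviation}). One small point: the paper uses Hoeffding rather than Bernstein for the i.i.d.\ term $B_n$, which yields $\sqrt{2t/n}$ exactly without the extra $2t/(3n)$ your Bernstein bound produces---swap to Hoeffding there if you want to match the stated constants precisely.
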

We defer all proofs to the appendices.

\section{Algorithm} \label{section:alg}

\begin{figure}
\framebox[\textwidth]{\begin{minipage}{0.95\textwidth}
\textbf{Algorithm 1} \\
Notes: see Eq.~\eqref{eq:iw-err} for the definition of $\err$ (importance
weighted error), and Section~\ref{section:alg} for the definitions of
$C_0$, $c_1$, and $c_2$.
\\
Initialize: $S_0 := \emptyset$. \\
For $k = 1, 2, \ldots, n$:
\begin{list}{}{\setlength{\leftmargin}{2em}}
\item[1.] Obtain unlabeled data point $X_k$.

\item[2.] Let
\begin{list}{}{\setlength{\leftmargin}{1em}}
\item $h_k := \arg \min \{ \err(h, S_{k-1}) : h \in \H \}$, and
\item $h_k' := \arg \min \{ \err(h, S_{k-1}) : h \in \H \ \wedge \ h(X_k)
\neq h_k(X_k) \}$.
\end{list}

Let $G_k := \err(h_k', S_{k-1}) - \err(h_k, S_{k-1})$, and
\[ P_k :=
\left\{\begin{array}{ll}
1 & \text{if
$G_k \leq \sqrt{\frac{C_0 \log k}{k-1}} + \frac{C_0 \log k}{k-1}$
} \\
s & \text{otherwise}
\end{array}\right.
\left(
 =
\min\left\{ 1, \ O\left( \frac{1}{G_k^2} + \frac1{G_k} \right)\cdot \frac{C_0 \log k}{k-1}
\right\}
\right)
\]
where $s \in (0,1)$ is the positive solution to the equation
\begin{equation} \label{eq:ical-quadratic}
G_k = \left( \frac{c_1}{\sqrt{s}} - c_1 + 1 \right) \cdot
\sqrt{\frac{C_0 \log k}{k-1}}
+ \left( \frac{c_2}{s} - c_2 + 1 \right) \cdot \frac{C_0 \log k}{k-1}
.
\end{equation}

\item[3.] Toss a biased coin with $\Pr(\text{heads}) = P_k$.
\begin{list}{}{\setlength{\leftmargin}{1em}}
\item If heads, then query $Y_k$, and let $S_k := S_{k-1} \cup \{
(X_k,Y_k,1/P_k) \}$.

\item Else, let $S_k := S_{k-1}$.

\end{list}

\end{list}
Return: $h_{n+1} := \arg\min\{\err(h,S_n) : h \in \H\}$.

\end{minipage}}
\caption{Algorithm for importance weighted active learning with an error
minimization oracle.}
\label{fig:ical}
\end{figure}

First, we state a deviation bound for the importance weighted error of
hypotheses in a finite hypothesis class $\H$ that holds for all $n \geq 1$.
It is a simple consequence of Theorem~\ref{theorem:iw-deviation} and union
bounds; the form of the bound motivates certain algorithmic choices to be
described below.
\begin{lemma} \label{lemma:ical-deviation}
Pick any $\delta \in (0,1)$.
For all $n \geq 1$, let
\begin{equation} \label{eq:ical-eps}
\veps_n := \frac{16\log(2(3+n\log_2 n)n(n+1)|\H|/\delta)}{n}
= O\left( \frac{\log(n|\H|/\delta)}{n} \right)
.
\end{equation}
Let $(Z_1, Z_2, \ldots) \in (\X\times\Y\times\{0,1\})^*$ be the sequence of
random variables specified in Section~\ref{section:iwal} using a rejection
threshold $p:(\X\times\Y\times\{0,1\})^* \times \X \to [0,1]$ that
satisfies $p(z_{1:n},x) \geq 1/n^n$ for all $(z_{1:n},x) \in
(\X\times\Y\times\{0,1\})^n \times \X$ and all $n \geq 1$.

The following holds with probability at least $1-\delta$.
For all $n \geq 1$ and all $h \in \H$,
\begin{gather}
|(\err(h,Z_{1:n}) - \err(h^*,Z_{1:n})) - (\err(h) - \err(h^*))|
\leq \sqrt{\frac{\veps_n}{P_{min,n}(h)}}
+ \frac{\veps_n}{P_{min,n}(h)}
\label{eq:ical-deviation}
\end{gather}
where
$P_{min,n}(h)
= \min\{ P_i : 1 \leq i \leq n \ \wedge \ h(X_i) \neq h^*(X_i) \} \cup
\{1\}$
.
\end{lemma}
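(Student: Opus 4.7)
The plan is to derive Lemma~\ref{lemma:ical-deviation} as a direct consequence of Theorem~\ref{theorem:iw-deviation} applied, for each $h \in \H$, to the bounded function
\[
f_h(x,y) \ := \ \I[h(x) \neq y] - \I[h^*(x) \neq y] \ \in \ \{-1,0,1\},
\]
together with a union bound over $h \in \H$ and over $n \geq 1$. The key observation is that $f_h$ packages both the difference of true errors and the difference of importance weighted errors: $\E[f_h(X,Y)] = \err(h) - \err(h^*)$ and $\wh{f_h}(Z_{1:n}) = \err(h,Z_{1:n}) - \err(h^*,Z_{1:n})$. So the quantity to be bounded is exactly $|\wh{f_h}(Z_{1:n}) - \E[f_h(X,Y)]|$, which is what Theorem~\ref{theorem:iw-deviation} estimates.

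Next I would identify $R_n$ from Theorem~\ref{theorem:iw-deviation} in terms of $P_{min,n}(h)$. Because $f_h(X_i,Y_i) \neq 0$ forces $h(X_i) \neq h^*(X_i)$, we have $\{i : f_h(X_i,Y_i)\neq 0\} \subseteq \{i : h(X_i) \neq h^*(X_i)\}$, so $R_n \leq 1/P_{min,n}(h)$. From the hypothesis $p \geq 1/n^n$ we may also take $r_{max} = n^n$, so $\log_2 r_{max} = n \log_2 n$ in the failure-probability factor $2(3 + \log_2 r_{max})e^{-t/2}$.

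Then I would plug in $t := n\veps_n/8 = 2\log\bigl(2(3+n\log_2 n)n(n+1)|\H|/\delta\bigr)$. With this choice the failure probability for a fixed $(h,n)$ becomes
\[
2(3+n\log_2 n)\,e^{-t/2} \ = \ \frac{\delta}{n(n+1)|\H|},
\]
and the deviation bound of Theorem~\ref{theorem:iw-deviation} simplifies, after using $P_{min,n}(h) \leq 1$ to absorb the middle $\sqrt{2t/n}$ term into the first, to
\[
2\sqrt{\frac{2t}{n\,P_{min,n}(h)}} + \frac{t}{3n\,P_{min,n}(h)} \ \leq \ \sqrt{\frac{\veps_n}{P_{min,n}(h)}} + \frac{\veps_n}{P_{min,n}(h)},
\]
which matches the bound in \eqref{eq:ical-deviation}. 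A union bound over all $h \in \H$ and all $n \geq 1$ gives total failure probability $\sum_{n \geq 1}\sum_{h \in \H}\delta/(n(n+1)|\H|) = \delta \sum_{n \geq 1}(1/n - 1/(n+1)) = \delta$, as required.

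The only subtle step is bookkeeping the constants so that the three-term bound of Theorem~\ref{theorem:iw-deviation} collapses to the two-term bound stated in the lemma; everything else is a clean substitution and a telescoping union bound. In particular, the apparently arbitrary factor $3 + n\log_2 n$ inside $\veps_n$ is exactly what is needed to swallow the $\log_2 r_{max} = n\log_2 n$ dependence coming from the worst-case lower bound $1/n^n$ on the query probability, so verifying that this specific $\veps_n$ produces the claimed bound when $t$ is chosen proportional to $n\veps_n$ is the main (but routine) thing to check.
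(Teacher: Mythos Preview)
Your proposal is correct and matches the paper's own approach, which the paper only states as ``a simple consequence of Theorem~\ref{theorem:iw-deviation} and union bounds.'' The key ideas---applying Theorem~\ref{theorem:iw-deviation} to the difference function $f_h(x,y)=\I[h(x)\neq y]-\I[h^*(x)\neq y]$, identifying $R_n$ with $1/P_{min,n}(h)$ via the equivalence $f_h(X_i,Y_i)\neq 0 \iff h(X_i)\neq h^*(X_i)$, setting $r_{max}=n^n$, choosing $t=n\veps_n/8$, and taking a union bound over $h\in\H$ and $n\geq1$ with weights $1/(n(n+1))$---are exactly what is intended, and your constant-tracking is accurate.
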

We let $C_0 = O(\log(|\H|/\delta)) \geq 2$ be a quantity such that
$\veps_n$ (as defined in Eq.~\eqref{eq:ical-eps}) is bounded as $\veps_n
\leq C_0 \cdot \log (n+1)/n$.
The following absolute constants are used in the description of the
rejection threshold and the subsequent analysis:
$c_1 := 5 + 2\sqrt2$,
$c_2 := 5$,
$c_3 := ((c_1+\sqrt2)/(c_1-2))^2$,
$c_4 := (c_1 + \sqrt{c_3})^2$,
$c_5 := c_2 + c_3$
.

Our proposed algorithm is shown in Figure~\ref{fig:ical}.
The rejection threshold (Step 2) is based on the deviation bound from
Lemma~\ref{lemma:ical-deviation}.
First, the importance weighted error minimizing hypothesis $h_k$ and the
``alternative'' hypothesis $h_k'$ are found.
Note that both optimizations are over the entire hypothesis class $\H$
(with $h_k'$ only being required to disagree with $h_k$ on $x_k$)---this
is a key aspect where our algorithm differs from previous approaches.
The difference in importance weighted errors $G_k$ of the two hypotheses is
then computed.
If $G_k \leq \sqrt{(C_0 \log k)/(k-1)} + (C_0 \log k)/(k-1)$, then the
query probability $P_k$ is set to $1$.
Otherwise, $P_k$ is set to the positive solution $s$ to the quadratic
equation in Eq.~\eqref{eq:ical-quadratic}.
The functional form of $P_k$ is roughly
\[
\min\left\{ 1, \ O\left(\frac1{G_k^2} + \frac1{G_k}\right) \cdot \frac{C_0
\log k}{k-1} \right\}
. \]
It can be checked that $P_k \in (0,1]$ and that $P_k$ is non-increasing
with $G_k$.
It is also useful to note that $(\log k)/(k-1)$ is monotonically decreasing
with $k \geq 1$ (we use the convention $\log(1)/0 = \infty$).

In order to apply Lemma~\ref{lemma:ical-deviation} with our rejection
threshold, we need to establish the (very crude) bound $P_k \geq 1/k^k$ for
all $k$.
\begin{lemma} \label{lemma:ical-crude-bound}
The rejection threshold of Algorithm 1 satisfies $p(z_{1:n-1},x) \geq
1/n^n$ for all $n \geq 1$ and all $(z_{1:n-1},x) \in
(\X\times\Y\times\{0,1\})^{n-1} \times \X$.
\end{lemma}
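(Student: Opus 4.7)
The plan is to prove the bound by induction on $n$, with a very loose analysis that just needs to show the rejection threshold never collapses below the $1/n^n$ floor. The base case $n = 1$ is trivial: $S_0 = \emptyset$, so $\err(h,S_0) = 0$ for every $h$, hence $G_1 = 0$ and the first branch of the rejection threshold fires, giving $P_1 = 1 \geq 1/1^1$.

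For the inductive step, I assume $P_j \geq 1/j^j$ for all $j < n$ and show $P_n \geq 1/n^n$. If the first branch of Step~2 fires, then $P_n = 1$ and we are done, so I may assume $P_n = s$ where $s$ is the positive solution of the quadratic \eqref{eq:ical-quadratic}. The key observation is that the right-hand side of \eqref{eq:ical-quadratic}, viewed as a function of $s \in (0,1]$, is strictly decreasing in $s$ (both $c_1/\sqrt{s}$ and $c_2/s$ are decreasing). Therefore it suffices to show that when we substitute $s := 1/n^n$, the right-hand side is at least $G_n$; monotonicity then forces the actual solution to satisfy $s \geq 1/n^n$.

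To estimate $G_n$ from above, I use the inductive hypothesis on the importance weights: for each $i < n$ we have $1/P_i \leq i^i$, so
\[
G_n \ \leq \ \err(h_n',S_{n-1}) \ \leq \ \frac{1}{n-1}\sum_{i=1}^{n-1} \frac{Q_i}{P_i} \ \leq \ \frac{1}{n-1}\sum_{i=1}^{n-1} i^i \ \leq \ (n-1)^{n-1}.
\]
On the other hand, plugging $s = 1/n^n$ into the right-hand side of \eqref{eq:ical-quadratic} yields (dropping the nonnegative $c_1$-term and using $c_2 = 5$, $C_0 \geq 2$)
\[
\left(c_2 n^n - c_2 + 1\right)\frac{C_0 \log n}{n-1}
\ \geq \ \frac{c_2 C_0 \log n}{n-1} \cdot \bigl(n^n - 1\bigr)
\ \geq \ 10 \, n^{n-1} \log n
\]
for $n \geq 2$, which comfortably exceeds $(n-1)^{n-1} \geq G_n$. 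By monotonicity of the right-hand side of \eqref{eq:ical-quadratic} in $s$, the actual solution must satisfy $s \geq 1/n^n$, closing the induction.

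The only mildly delicate point is keeping the constants clean in the final inequality $10 n^{n-1}\log n \geq (n-1)^{n-1}$; everything else is just bookkeeping. The bound $1/P_i \leq i^i$ is absurdly loose, but that is fine since Lemma~\ref{lemma:ical-deviation} only needs \emph{some} a priori lower bound on $P_i$ and its dependence on $r_{\max}$ is only $\log\log r_{\max}$.
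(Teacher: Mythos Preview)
Your proof is correct and follows essentially the same inductive strategy as the paper: use the inductive hypothesis to bound $G_n \leq O((n-1)^{n-1})$, then verify that the resulting $P_n$ is at least $1/n^n$. The only difference is in the last step's execution: the paper solves the quadratic in Eq.~\eqref{eq:ical-quadratic} explicitly for $\sqrt{p_n}$ and chains inequalities down to $\sqrt{1/n^n}$, whereas you observe that the right-hand side is strictly decreasing in $s$ and reduce to a single comparison at $s = 1/n^n$---a slightly cleaner but not substantively different maneuver.
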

Note that this is a worst-case bound; our analysis shows that the
probabilities $P_k$ are more like $1/\mbox{poly}(k)$ in the typical case.

\section{Analysis}

\subsection{Correctness}

We first prove a consistency guarantee for Algorithm 1 that bounds the
generalization error of the importance weighted empirical error minimizer.
The proof actually establishes a lower bound on the query probabilities
$P_i \geq 1/2$ for $X_i$ such that $h_n(X_i) \neq h^*(X_i)$.
This offers an intuitive characterization of the weighting landscape
induced by the importance weights $1/P_i$.
\begin{theorem} \label{theorem:ical-consistency}
The following holds with probability at least $1 - \delta$.
For any $n \geq 1$,
\begin{equation*}
0
\ \leq \
\err(h_n) - \err(h^*)
\ \leq \
\err(h_n,Z_{1:n-1}) - \err(h^*,Z_{1:n-1}) + \sqrt{\frac{2C_0 \log n}{n-1}}
+ \frac{2C_0 \log n}{n-1}
.
\end{equation*}
This implies, for all $n \geq 1$,
\begin{equation*}
\err(h_n) \ \leq \ \err(h^*)
+ \sqrt{\frac{2C_0 \log n}{n-1}} + \frac{2C_0 \log n}{n-1}
.
\end{equation*}
\end{theorem}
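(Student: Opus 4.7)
The plan is to combine Lemma~\ref{lemma:ical-deviation} with a structural lower bound on the query probabilities, namely $P_{min,n-1}(h_n) \geq 1/2$. By Lemma~\ref{lemma:ical-crude-bound}, Algorithm~1's rejection threshold meets the precondition $p(\cdot) \geq 1/n^n$ of Lemma~\ref{lemma:ical-deviation}, so I condition on the good event $E$ of probability at least $1-\delta$ on which, for every $m \geq 1$ and every $h \in \H$,
\[
\bigl|(\err(h,Z_{1:m}) - \err(h^*,Z_{1:m})) - (\err(h) - \err(h^*))\bigr| \;\leq\; \sqrt{\veps_m/P_{min,m}(h)} + \veps_m/P_{min,m}(h).
\]
Applied with $h = h_n$ and $m = n-1$, and using $\veps_{n-1} \leq C_0 \log n /(n-1)$, the lower bound $P_{min,n-1}(h_n) \geq 1/2$ converts the right-hand side into $\sqrt{2C_0\log n/(n-1)} + 2C_0\log n/(n-1)$, giving the first displayed inequality. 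The second inequality then follows from $\err(h_n,Z_{1:n-1}) \leq \err(h^*,Z_{1:n-1})$ (empirical minimality of $h_n$), and $0 \leq \err(h_n) - \err(h^*)$ is by definition of $h^*$.

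To establish $P_{min,n-1}(h_n) \geq 1/2$ on $E$, fix $i \leq n-1$ with $h_n(X_i) \neq h^*(X_i)$. Since these labels are opposite, exactly one of $\{h_n, h^*\}$ disagrees with $h_i(X_i)$; call it $h^{\mathrm{alt}}$. Then $h^{\mathrm{alt}}$ is feasible in the constrained minimization defining $h_i'$, so
\[
G_i \;\leq\; \err(h^{\mathrm{alt}}, S_{i-1}) - \err(h_i, S_{i-1}).
\]
Invoking the deviation bound on $E$ for $h^{\mathrm{alt}}$ and for $h_i$ separately, using $\err(h_i) \geq \err(h^*)$ (and the convention $P_{min,i-1}(h^*) = 1$ in case $h^{\mathrm{alt}} = h^*$), yields an upper bound on $G_i$ of the form $\sqrt{\veps_{i-1}/P_{min,i-1}(\cdot)} + \veps_{i-1}/P_{min,i-1}(\cdot)$, plus an extra $\err(h_n) - \err(h^*)$ summand when $h^{\mathrm{alt}} = h_n$. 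Once $G_i$ is controlled by a moderate constant times $\sqrt{C_0\log i/(i-1)} + C_0\log i/(i-1)$, plugging $s = 1/2$ into~\eqref{eq:ical-quadratic} and using $c_1 = 5 + 2\sqrt{2}$, $c_2 = 5$ certifies $P_i \geq 1/2$ by monotonicity of the right-hand side of~\eqref{eq:ical-quadratic} in $s$.

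The main obstacle is the subcase $h^{\mathrm{alt}} = h_n$, where the bound on $G_i$ incorporates the quantity $\err(h_n) - \err(h^*)$ that the theorem itself is asserting to be small---a genuine circular dependence. I resolve this by strong induction on $n$, simultaneously carrying two statements for all $k \leq n$: (a)~$\err(h_k) - \err(h^*) \leq \sqrt{2C_0\log k/(k-1)} + 2C_0\log k/(k-1)$ and (b)~$P_{min,k-1}(h_k) \geq 1/2$. At step $n$, the subcase $h^{\mathrm{alt}} = h^*$ is dispatched directly by (b) at step $i < n$; the subcase $h^{\mathrm{alt}} = h_n$ is handled by noting that $P_{min,i-1}(h_n)$ only involves $P_j$ for $j < i$, each of which inherits the $\geq 1/2$ lower bound by applying the same analysis recursively at index $j$, and by exploiting the monotonicity of $\log k/(k-1)$ in $k$, so that the step-$n$ excess-error bound on $\err(h_n) - \err(h^*)$ is dominated by the step-$i$ bound and keeps the quadratic comparison meaningful. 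The auxiliary constants $c_3,c_4,c_5$ are calibrated precisely so that the accumulated slack from these deviation terms still leaves $G_i$ dominated by the right-hand side of~\eqref{eq:ical-quadratic} at $s = 1/2$; carrying out this algebraic bookkeeping is the most delicate remaining task.
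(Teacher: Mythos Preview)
Your overall scaffolding matches the paper: condition on the event of Lemma~\ref{lemma:ical-deviation}, argue by strong induction on $n$, and reduce the inductive step to the claim $P_{min,n-1}(h_n)\geq 1/2$. The case $h^{\mathrm{alt}}=h^*$ is fine. The difficulty is exactly where you locate it---the case $h^{\mathrm{alt}}=h_n$---and your proposed resolution does not close the circle.

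Concretely: in that case you obtain $G_i \leq (\err(h_n)-\err(h^*)) + 2\sqrt{2\veps_{i-1}} + 4\veps_{i-1}$, and to conclude $P_i\geq 1/2$ you need $G_i \leq 3\sqrt{2}\,\sqrt{\veps_{i-1}} + 6\veps_{i-1}$ (the right-hand side of Eq.~\eqref{eq:ical-quadratic} at $s=1/2$). This requires $\err(h_n)-\err(h^*)\leq \sqrt{2\veps_{i-1}}+2\veps_{i-1}$, which is precisely the step-$n$ conclusion (plus monotonicity of $\veps$). The inductive hypothesis gives this for $h_k$ with $k<n$, not for $h_n$; your inner recursion on $i$ controls $P_{min,i-1}(h_n)$ but never manufactures a bound on the \emph{true} excess error of $h_n$. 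So the argument, as written, assumes what it is trying to prove. (Also, $c_3,c_4,c_5$ play no role here; they are tuned for the label-complexity Lemma~\ref{lemma:ical-pmin-bound}/Lemma~\ref{lemma:ical-query}, not for this theorem.)

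The paper breaks the circularity differently. It does not try to upper bound $G_i$; instead it argues by contradiction. Assume $P_{min}:=P_{min,n-1}(h_n)<1/2$ and pick $n_0\leq n-1$ with $P_{n_0}=P_{min}$ and $h_n(X_{n_0})\neq h^*(X_{n_0})$. From $P_{n_0}<1$ the quadratic~\eqref{eq:ical-quadratic} gives a \emph{lower} bound on $G_{n_0}=\err(h_{n_0}',Z_{1:n_0-1})-\err(h_{n_0},Z_{1:n_0-1})$ in terms of $1/P_{min}$. Combined with the inductive hypothesis at $k=n_0$ (only), this makes $\err(h_{n_0}',Z_{1:n_0-1})-\err(h^*,Z_{1:n_0-1})>0$, forcing $h^*(X_{n_0})=h_{n_0}(X_{n_0})$ and hence $h_n(X_{n_0})=h_{n_0}'(X_{n_0})$, so $\err(h_n,Z_{1:n_0-1})\geq \err(h_{n_0}',Z_{1:n_0-1})$. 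The key maneuver is then to apply the deviation bound for $h_n$ \emph{twice}, once at sample size $n-1$ and once at $n_0-1$, using $\err(h_n)-\err(h^*)$ only as a pivot that cancels:
\[
\err(h_n,Z_{1:n-1})-\err(h^*,Z_{1:n-1})
\;\geq\;
\err(h_n,Z_{1:n_0-1})-\err(h^*,Z_{1:n_0-1})
-2\sqrt{\veps_{n_0-1}/P_{min}}-2\,\veps_{n_0-1}/P_{min}.
\]
With $P_{min}<1/2$ and the choices $c_1=5+2\sqrt2$, $c_2=5$, the right-hand side is strictly positive, contradicting the empirical minimality of $h_n$ at step $n-1$. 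Note that no bound on $\err(h_n)-\err(h^*)$ is ever assumed; the contradiction lands on the \emph{empirical} quantity $\err(h_n,Z_{1:n-1})-\err(h^*,Z_{1:n-1})\leq 0$, which you do have. That is the missing idea.
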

Therefore, the final hypothesis returned by Algorithm 1 after seeing $n$
unlabeled data has roughly the same error bound as a hypothesis returned by
a standard passive learner with $n$ labeled data.
A variant of this result under certain noise conditions is given in the
appendix.

\subsection{Label Complexity Analysis}

We now bound the number of labels requested by Algorithm 1 after $n$
iterations.
The following lemma bounds the probability of querying the label $Y_n$;
this is subsequently used to establish the final bound on the expected
number of labels queried.
The key to the proof is in relating empirical error differences and their
deviations to the probability of querying a label.
This is mediated through the \emph{disagreement coefficient}, a quantity
first used by \cite{Han07} for analyzing the label complexity of the $A^2$
algorithm of~\cite{BBL06}.
The disagreement coefficient $\theta := \theta(h^*,\H,\D)$ is defined as
\[ \theta(h^*,\H,\D) := \sup \left\{ \frac{\Pr(X \in \DIS(h^*,r))}{r} : r >
0 \right\}
\]
where
\[ \DIS(h^*,r) := \{ x \in \X : \exists h' \in \H \ \text{such that} \
\Pr(h^*(X) \neq h'(X)) \leq r \ \text{and} \ h^*(x) \neq h'(x) \}
\]
(the disagreement region around $h^*$ at radius $r$).
This quantity is bounded for many learning problems studied in the
literature; see \cite{Han07,Han09,Fri09,Wan09} for more discussion.
Note that the supremum can instead be taken over $r > \epsilon$ if the
target excess error is $\epsilon$, which allows for a more detailed
analysis.

\begin{lemma} \label{lemma:ical-query}
Assume the bounds from Eq.~\eqref{eq:ical-deviation} holds for all $h \in
\H$ and $n \geq 1$.
For any $n \geq 1$,
\begin{equation*}
\E[Q_n]
\leq
\theta \cdot 2\err(h^*)
+ O\left( \theta \cdot \sqrt{\frac{C_0 \log n}{n-1}}
+ \theta \cdot \frac{C_0 \log^2n}{n-1} \right)
.
\end{equation*}

\end{lemma}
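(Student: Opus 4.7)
The plan is to bound $\E[Q_n] = \E[P_n]$ via the layer-cake identity $\E[P_n] = \int_0^1 \Pr(P_n \geq t)\,dt$ and to control each $\Pr(P_n \geq t)$ through the disagreement coefficient. The key chain of implications is that $P_n \geq t$ forces the empirical gap $G_n$ to be small, which in turn forces the alternative hypothesis $h_n'$ to be close to $h^*$ in true error and hence in Pr-disagreement via Lemma~\ref{lemma:ical-deviation}, which together with the fact that $h_n'(X_n) \neq h^*(X_n)$ (on a high-probability event) places $X_n$ in $\DIS(h^*, r(t))$ for an explicit $r(t)$; the disagreement coefficient then takes over.

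Concretely, I would first invert the quadratic in Eq.~\eqref{eq:ical-quadratic} to write $\{P_n \geq t\} = \{G_n \leq g(t)\}$, with $g(t) = \Theta(\sqrt{C_0 \log n/((n-1)t)})$ for $t \geq C_0 \log n/(n-1)$ and $g(t) = \Theta(C_0 \log n/((n-1)t))$ for smaller $t$. On $\{G_n \leq g\}$, the empirical minimality of $h_n$ yields $\err(h_n', S_{n-1}) - \err(h^*, S_{n-1}) \leq G_n \leq g$, so Lemma~\ref{lemma:ical-deviation} applied to $h_n'$ gives $\err(h_n') - \err(h^*) \leq g + O(\sqrt{C_0 \log n/(n-1)} + C_0 \log n/(n-1))$ once $P_{min,n-1}(h_n')$ is lower-bounded by $1/2$, as in the proof of Theorem~\ref{theorem:ical-consistency}. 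The elementary identity $\Pr(h(X) \neq h^*(X)) \leq \err(h) + \err(h^*)$ then yields $\Pr(h_n'(X) \neq h^*(X)) \leq 2\err(h^*) + g + O(\sqrt{C_0 \log n/(n-1)})$. To actually place $X_n$ in the disagreement region, I would separately verify that $h_n(X_n) = h^*(X_n)$ on a high-probability event (in the contrary case, $h^*$ itself is a valid $h_n'$, so $G_n$ is automatically small and the contribution is bounded by $2\err(h^*) + O(\sqrt{C_0 \log n/(n-1)})$ anyway); on the good event, $h_n'(X_n) \neq h_n(X_n) = h^*(X_n)$, so by the definition of $\DIS$, $X_n \in \DIS(h^*, r(t))$ with $r(t) = 2\err(h^*) + g(t) + O(\sqrt{C_0 \log n/(n-1)})$.

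Hence $\Pr(P_n \geq t) \leq \min\{1, \theta \cdot r(t)\}$. Splitting the layer-cake integral at $t_\star$ where $\theta r(t_\star) = 1$, the low-$t$ region contributes at most $t_\star = O(\theta^2 C_0 \log n/(n-1))$, while the high-$t$ region contributes $\theta \cdot 2\err(h^*) + O(\theta \sqrt{C_0 \log n/(n-1)})$ via $\int_{t_\star}^1 \theta g(t)\,dt = O(\theta \sqrt{C_0 \log n/(n-1)})$ on the $g(t) \sim \sqrt{\beta/t}$ branch. The main technical obstacles are (i) dispatching the random $P_{min,n-1}(h_n')$ in the deviation bound using the $P_i \geq 1/2$ byproduct of Theorem~\ref{theorem:ical-consistency}, (ii) controlling the agreement event $\{h_n(X_n) = h^*(X_n)\}$ uniformly, and (iii) the careful layer-cake bookkeeping near $t = 0$ where the $1/t$ branch of $g$ threatens to diverge---this is what produces the $C_0 \log^2 n/(n-1)$ term in the final bound.
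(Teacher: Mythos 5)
Your high-level architecture matches the paper's: relate the event that $P_n$ is large to the true excess error of the hypothesis that disagrees with $h^*$ at $X_n$ (which is $h_n'$ when $h_n(X_n)=h^*(X_n)$ and $h_n$ otherwise), place $X_n$ in $\DIS(h^*,2\err(h^*)+\bar\gamma)$ via the triangle inequality, and integrate; your layer-cake integral over $t$ is just a change of variables away from the paper's integration-by-parts over $\gamma=\err(h)-\err(h^*)$, and your accounting of where the $\sqrt{\veps_{n-1}}$ and $\veps_{n-1}\log(1/\veps_{n-1})$ terms come from is right.

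However, there is a genuine gap at the step you yourself flag as obstacle (i): you cannot dispatch $P_{min,n-1}(h_n')$ using the ``$P_i\geq 1/2$ byproduct'' of Theorem~\ref{theorem:ical-consistency}. That argument establishes $P_i\geq 1/2$ only at points where the \emph{empirical minimizer} $h_n$ disagrees with $h^*$; the contradiction it derives is with the global empirical minimality of $h_n$, and it has no analogue for $h_n'$, which is only a constrained minimizer and may disagree with $h^*$ at many past points $X_i$ that received tiny query probabilities. Consequently, a small empirical gap $\err(h_n',Z_{1:n-1})-\err(h^*,Z_{1:n-1})\leq g(t)$ does \emph{not} directly translate into a small true gap via Eq.~\eqref{eq:ical-deviation}: the deviation term $\sqrt{\veps_{n-1}/P_{min,n-1}(h_n')}$ is a priori uncontrolled. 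The paper closes this hole with a separate argument (Lemma~\ref{lemma:ical-pmin-bound}), showing $P_n\leq c_3\cdot P_{min}$: if some past $P_{n_0}=P_{min}$ were much smaller than $P_n$, then the large empirical gap at time $n_0$ that caused it, combined with consistency at time $n_0$ and transitivity of agreement at $X_{n_0}$, forces the \emph{current} empirical gap $G_n$ to be large, contradicting the definition of $P_n$. With this, the deviation term becomes $O(\sqrt{\veps_{n-1}/P_n})$, i.e., of the same order as $g(t)$ on the event $\{P_n\geq t\}$, and your integral then goes through as you describe. A secondary, purely quantitative point: your two-case split (on whether $h_n(X_n)=h^*(X_n)$) pays the $\theta\cdot 2\err(h^*)$ term twice, yielding $\theta\cdot 4\err(h^*)$; the paper avoids this by defining a single random hypothesis $h$ (Eq.~\eqref{eq:defn-h}) that always disagrees with $h^*$ at $X_n$ and bounding its tail once.
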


\begin{theorem} \label{theorem:ical-labelcomplexity}
With probability at least $1-\delta$,
the expected number of labels queried by Algorithm 1 after $n$ iterations
is at most
\begin{equation*}
1 + \theta \cdot 2\err(h^*) \cdot (n-1)
+ O\left( \theta \cdot \sqrt{C_0 n \log n}
+ \theta \cdot C_0 \log^3 n
\right)
.
\end{equation*}
\end{theorem}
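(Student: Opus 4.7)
The plan is to obtain the bound by summing the per-round query-probability bound from Lemma~\ref{lemma:ical-query} over $k = 1, \ldots, n$. Since the expected total number of labels is $\sum_{k=1}^n \E[Q_k]$, it suffices to control each $\E[Q_k]$ individually and then sum. The $k = 1$ round always queries: with $S_0 = \emptyset$ every hypothesis has zero importance weighted error, so $G_1 = 0$, and the threshold $\sqrt{C_0 \log 1/0} + C_0\log 1/0$ is $+\infty$ under the convention $\log(1)/0 = \infty$ stated after Lemma~\ref{lemma:ical-crude-bound}; thus $P_1 = 1$ deterministically, contributing the leading ``$1$'' in the bound.

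For $k \geq 2$ I would apply Lemma~\ref{lemma:ical-query}, whose hypothesis is exactly the uniform deviation event of Lemma~\ref{lemma:ical-deviation}. That event holds with probability at least $1 - \delta$, which is the source of the ``with probability at least $1 - \delta$'' clause in the theorem. On this event each $\E[Q_k]$ decomposes into a noise summand of order $\theta\,\err(h^*)$, a square-root summand of order $\theta \sqrt{C_0 \log k/(k-1)}$, and a logarithmic summand of order $\theta\,C_0 \log^2 k/(k-1)$.

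The remainder is a routine integral-test comparison. Summing the noise term over $k = 2, \ldots, n$ gives $2\theta\,\err(h^*)(n-1)$. The square-root term yields $O(\theta \sqrt{C_0 n \log n})$ via $\sum_{k=2}^n \sqrt{\log k/(k-1)} \leq \sqrt{\log n}\sum_{k=2}^n (k-1)^{-1/2} = O(\sqrt{n \log n})$, and the logarithmic term yields $O(\theta\,C_0 \log^3 n)$ via $\sum_{k=2}^n \log^2 k/(k-1) \leq \log^2 n \sum_{k=2}^n 1/(k-1) = O(\log^3 n)$. Adding the forced first query then gives the claimed bound. The only real subtlety is careful handling of the conditioning: Lemma~\ref{lemma:ical-query}'s per-round bound is a deterministic upper bound on $\E[Q_k]$ valid on the deviation event, so the theorem naturally comes out as a high-probability statement on the expected label count rather than as a bound on the unconditional expectation $\E[\sum_k Q_k]$; no further structural argument beyond Lemma~\ref{lemma:ical-query} is needed.
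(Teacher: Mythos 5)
Your proposal is correct and follows exactly the route the paper intends: the paper's own proof is a one-line reference to Lemmas~\ref{lemma:ical-deviation}, \ref{lemma:ical-crude-bound}, and \ref{lemma:ical-query} plus linearity of expectation and the forced first query, and you have simply filled in the summation details (the integral-test bounds $\sum_{k}\sqrt{\log k/(k-1)} = O(\sqrt{n\log n})$ and $\sum_k \log^2 k/(k-1) = O(\log^3 n)$ are right). No discrepancy with the paper's argument.
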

\begin{proof}
Follows from assuming $Y_1$ is always queried; applying
Lemmas~\ref{lemma:ical-deviation}, \ref{lemma:ical-crude-bound},
\ref{lemma:ical-query}, and linearity of expectation.
\end{proof}
The bound is dominated by a linear term scaled by $\err(h^*)$, plus a
sublinear term.
The linear term $\err(h^*) \cdot n$ is unavoidable in the worst case, as
evident from label complexity lower bounds~\cite{Kaa06,BDL09}.
When $\err(h^*)$ is negligible (\emph{e.g.}, the data is separable) and
$\theta$ is bounded (as is the case for many problems studied in the
literature~\cite{Han07}), then the bound represents a polynomial
label complexity improvement over supervised learning, similar to that
achieved by the version space algorithm from~\cite{BDL09}.

\subsection{Analysis under Low Noise Conditions}

Some recent work on active learning has focused on improved label
complexity under certain noise conditions~\cite{CN06,BBZ07,CN07,Han09,Kol09}.
Specifically, it is assumed that there exists constants $\kappa > 0$ and
$0 < \alpha \leq 1$ such that
\begin{equation} \label{eq:tsybakov}
\Pr(h(X) \neq h^*(X)) \leq \kappa \cdot \left(\err(h) -
\err(h^*)\right)^{\alpha}
\end{equation}
for all $h \in \H$.
This is related to Tsybakov's low noise condition~\cite{Tsy04}.
Essentially, this condition requires that low error hypotheses not be too
far from the optimal hypothesis $h^*$ under the disagreement metric
$\Pr(h^*(X) \neq h(X))$.
Under this condition, Lemma~\ref{lemma:ical-query} can be improved, which
in turn yields the following theorem.
\begin{theorem} \label{theorem:ical-labelcomplexity2}
Assume that for some value of $\kappa > 0$ and $0 < \alpha \leq 1$, the
condition in Eq.~\eqref{eq:tsybakov} holds for all $h \in \H$.
There is a constant $c_\alpha > 0$ depending only on $\alpha$ such that the
following holds.
With probability at least $1-\delta$,
the expected number of labels queried by Algorithm 1 after $n$ iterations
is at most
\begin{equation*}
\theta \cdot \kappa \cdot c_\alpha
\cdot \left( C_0 \log n\right)^{\alpha/2}
\cdot n^{1-\alpha/2}
.
\end{equation*}
\end{theorem}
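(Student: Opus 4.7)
The plan is to sharpen the per-iteration query-probability bound from Lemma~\ref{lemma:ical-query} by invoking the low-noise condition \eqref{eq:tsybakov}, and then sum the resulting estimates over $k = 1, \ldots, n$. The agnostic bound on $\E[Q_n]$ has a leading term $\theta \cdot 2\err(h^*)$ coming from the crude inequality $\Pr(h(X) \neq h^*(X)) \leq \err(h) + \err(h^*)$, which is used to control the radius of the disagreement region. The Tsybakov condition replaces this with the much sharper inequality $\Pr(h \neq h^*) \leq \kappa(\err(h) - \err(h^*))^\alpha$, and the improvement propagates through the analysis.

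Concretely, I would revisit the proof of Lemma~\ref{lemma:ical-query} and track, at each step $k$, the ``witness'' hypothesis $h_k'$ whose disagreement with $h_k$ on $X_k$ forces $P_k < 1$. The rejection threshold (Step~2 of Algorithm 1) ensures that whenever this occurs, $G_k = \err(h_k', S_{k-1}) - \err(h_k, S_{k-1})$ is small, and combining this with the deviation bound of Lemma~\ref{lemma:ical-deviation} and the consistency guarantee of Theorem~\ref{theorem:ical-consistency} shows that $\err(h_k') - \err(h^*) = O(\sqrt{C_0 \log k / (k-1)})$. Applying \eqref{eq:tsybakov} to this excess error yields $\Pr(h_k'(X) \neq h^*(X)) \leq \kappa \cdot O((C_0 \log k/(k-1))^{\alpha/2})$. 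The disagreement-coefficient argument used in the original proof then gives, unconditionally,
\[
\E[Q_k] \;\leq\; O\!\left(\theta \kappa \left(\frac{C_0 \log k}{k-1}\right)^{\alpha/2}\right).
\]

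Summing via linearity of expectation, and using the standard integral estimate $\sum_{k=2}^n k^{-\alpha/2} \leq c_\alpha' \cdot n^{1-\alpha/2}$ for some constant depending only on $\alpha \in (0,1]$, one obtains that the expected number of label queries is at most $\theta \kappa c_\alpha (C_0 \log n)^{\alpha/2} n^{1-\alpha/2}$, which is exactly the claimed bound after absorbing constants into $c_\alpha$.

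The main obstacle is the refinement of Lemma~\ref{lemma:ical-query}: one must carefully identify, for each iteration in which $P_k < 1$, a hypothesis $h$ whose disagreement with $h^*$ (a) captures the event $X_k \in \DIS(h^*, r_k)$ that we want to bound, and (b) has small enough excess error for the Tsybakov bound to produce a $k^{-\alpha/2}$-size disagreement radius. This requires a delicate book-keeping argument comparing $h_k$, $h_k'$, and $h^*$ in both empirical and true error, handling the cases $h_k = h^*$ and $h_k \neq h^*$ separately. Once the improved per-step bound is in place, the summation and application of linearity of expectation are routine.
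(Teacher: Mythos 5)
Your high-level plan---sharpen the per-round bound on $\E[Q_k]$ from Lemma~\ref{lemma:ical-query} by substituting the Tsybakov tail for the crude $\Pr(h \neq h^*) \leq 2\err(h^*) + \gamma$ bound, then sum over $k$---is exactly the route the paper takes (its Lemma~\ref{lemma:ical-query2}), and your summation step is fine. The gap is in how you propose to obtain the per-round bound $\E[Q_k] = O(\theta\kappa\,\veps_{k-1}^{\alpha/2})$ with $\veps_{k-1} = C_0\log k/(k-1)$. First, the rejection-threshold logic is inverted: $P_k < 1$ occurs precisely when $G_k$ is \emph{large} (larger than $\sqrt{\veps_{k-1}} + \veps_{k-1}$), not small. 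More importantly, the witness hypothesis $h$ (the one of $h_k, h_k'$ disagreeing with $h^*$ on $X_k$) satisfies only $\err(h) - \err(h^*) \leq O\bigl(\sqrt{\veps_{k-1}/P_k} + \veps_{k-1}/P_k\bigr)$; this is $O(\sqrt{\veps_{k-1}})$ only when $P_k$ is bounded away from zero. When $P_k$ is tiny the witness can have constant excess error, so its disagreement region has constant mass and a single application of the disagreement coefficient at radius $\kappa\cdot O(\veps_{k-1}^{\alpha/2})$ does not cover the event that a query is made.

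The correct mechanism inverts the relation above to get $P_k \leq \min\{1,\ O((c_4/\gamma^2 + c_5/\gamma)\,\veps_{k-1})\}$ where $\gamma := \err(h) - \err(h^*)$, and then computes $\E[P_k]$ by integrating this envelope against the distribution of the random variable $\gamma$, using the tail bound $\Pr(\gamma \leq \bar\gamma) \leq \theta\kappa\bar\gamma^{\alpha}$ inside an integration by parts over $\gamma \in [\gamma_0, 1]$ with $\gamma_0 \approx \sqrt{\veps_{k-1}}$. It is this layer-cake integral---exploiting simultaneously that for intermediate $\gamma$ both $P_k(\gamma)$ and $\Pr(\gamma \leq \bar\gamma)$ are small---that produces $\theta\kappa\,\veps_{k-1}\gamma_0^{\alpha-2} = O(\theta\kappa\,\veps_{k-1}^{\alpha/2})$. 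The additive two-regime split your sketch suggests (small-$\gamma$: disagreement coefficient; large-$\gamma$: small query probability) balances only at $\gamma_1 = \veps_{k-1}^{1/(2+\alpha)}$ and gives $\E[Q_k] = O(\veps_{k-1}^{\alpha/(2+\alpha)})$, hence a total of order $n^{2/(2+\alpha)}$ labels, which is strictly weaker than the claimed $n^{1-\alpha/2}$ for every $\alpha \in (0,1]$. So the missing ingredient is the integration-by-parts step from the proof of Lemma~\ref{lemma:ical-query}, carried out with the Tsybakov tail in place of the agnostic one.
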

Note that the bound is sublinear in $n$ for all $0 < \alpha \leq 1$, which
implies label complexity improvements whenever $\theta$ is bounded (an
improved analogue of Theorem~\ref{theorem:ical-consistency} under these
conditions can be established using similar techniques).
The previous algorithms of~\cite{Han09,Kol09} obtain even better rates
under these noise conditions using specialized data dependent
generalization bounds, but these algorithms also required optimizations
over restricted version spaces, even for the bound computation.

\section{Experiments}

Although agnostic learning is typically intractable in the worst case,
empirical risk minimization can serve as a useful abstraction for many
practical supervised learning algorithms in non-worst case scenarios.
With this in mind, we conducted a preliminary experimental evaluation
of Algorithm 1, implemented using a popular algorithm for learning
decision trees in place of the required ERM oracle.  Specifically, we
use the \texttt{J48} algorithm from Weka v3.6.2 (with default
parameters) to select the hypothesis $h_k$ in each round $k$; to
produce the ``alternative'' hypothesis $h_k'$, we just modify the
decision tree $h_k$ by changing the label of the node used for
predicting on $x_k$.  Both of these procedures are clearly heuristic,
but they are similar in spirit to the required optimizations.  We set
$C_0 = 8$ and $c_1 = c_2 = 1$---these can be regarded as tuning
parameters, with $C_0$ controlling the aggressiveness of the rejection
threshold.  We did not perform parameter tuning with active learning
although the importance weighting approach developed here could
potentially be used for that.  Rather, the goal of these experiments is
to assess the compatibility of Algorithm 1 with an existing, practical
supervised learning procedure.

\subsection{Data Sets}

We constructed two binary classification tasks using MNIST and KDDCUP99
data sets.
For MNIST, we randomly chose $4000$ training $3$s and $5$s for training
(using the $3$s as the positive class), and used all of the $1902$ testing
$3$s and $5$s for testing.
For KDDCUP99, we randomly chose $5000$ examples for training, and another
$5000$ for testing.
In both cases, we reduced the dimension of the data to $25$ using PCA.

To demonstrate the versatility of our algorithm, we also conducted a
multi-class classification experiment using the entire MNIST data set (all
ten digits, so $60000$ training data and $10000$ testing data).
This required modifying how $h_k'$ is selected: we force $h_k'(x_k) \neq
h_k(x_k)$ by changing the label of the prediction node for $x_k$ to the
next best label.
We used PCA to reduce the dimension to $40$.

\subsection{Results}

\begin{figure}
\begin{center}
\begin{tabular}{cc}
\includegraphics[height=0.2\textheight]{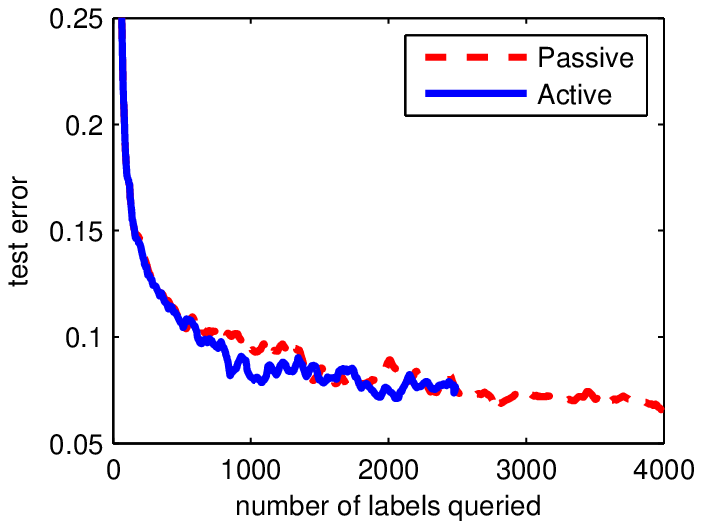}
&
\includegraphics[height=0.2\textheight]{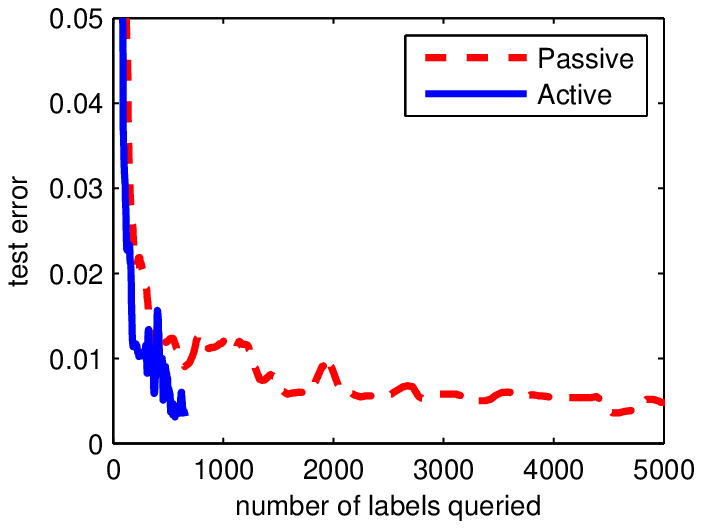}
\\
MNIST $3$s vs $5$s &
KDDCUP99
\\
\includegraphics[height=0.2\textheight]{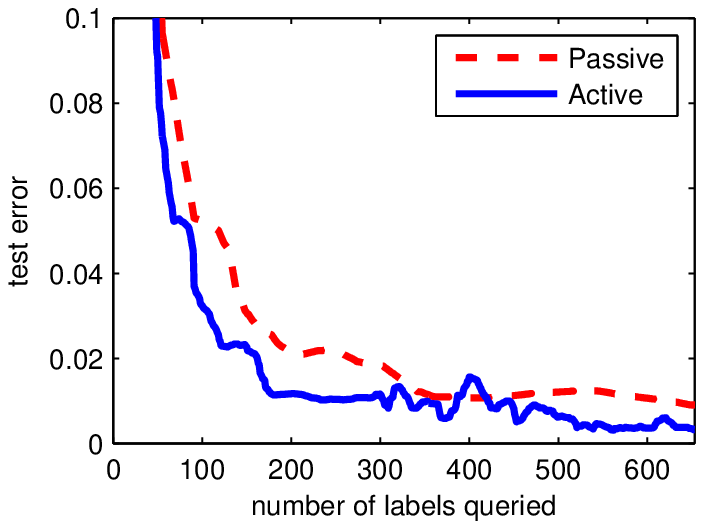}
&
\includegraphics[height=0.2\textheight]{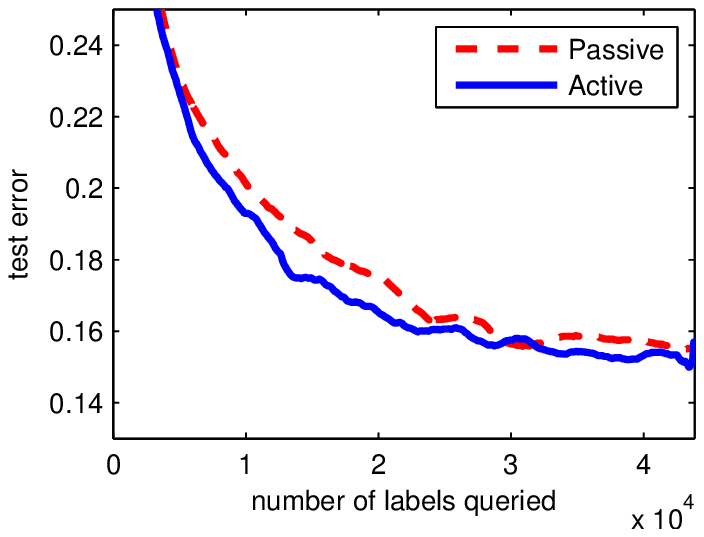}
\\
KDDCUP99 (close-up)
&
MNIST multi-class (close-up)
\end{tabular}
\end{center}
\caption{Test errors as a function of the number of labels queried.}
\label{fig:test-error-labeled}
\end{figure}

We examined the test error as a function of (i) the number of unlabeled
data seen, and (ii) the number of labels queried.
We compared the performance of the active learner described above to a
passive learner (one that queries every label, so (i) and (ii) are the
same) using \texttt{J48} with default parameters.

In all three cases, the test errors as a function of the number of
unlabeled data were roughly the same for both the active and passive
learners.
This agrees with the consistency guarantee from
Theorem~\ref{theorem:ical-consistency}.
We note that this is a basic property \emph{not} satisfied by many active
learning algorithms (this issue is discussed further in~\cite{DH08}).

In terms of test error as a function of the number of labels queried
(Figure~\ref{fig:test-error-labeled}), the active learner had minimal
improvement over the passive learner on the binary MNIST task, but a
substantial improvement over the passive learner on the KDDCUP99 task
(even at small numbers of label queries).  For the multi-class MNIST
task, the active learner had a moderate improvement over the passive
learner.  Note that KDDCUP99 is far less noisy (more separable) than
MNIST $3$s vs $5$s task, so the results are in line with the label
complexity behavior suggested by
Theorem~\ref{theorem:ical-labelcomplexity}, which states that the
label complexity improvement may scale with the error of the optimal
hypothesis.  Also, the results from MNIST tasks suggest that the
active learner may require an initial random sampling phase during
which it is equivalent to the passive learner, and the advantage
manifests itself after this phase.  This again is consistent with the
analysis (also see~\cite{Han07}), as the disagreement coefficient can
be large at initial scales, yet much smaller as the number of
(unlabeled) data increases and the scale becomes finer.

\section{Conclusion}

This paper provides a new active learning algorithm based on error
minimization oracles, a departure from the version space approach
adopted by previous works.  The algorithm we introduce here motivates
computationally tractable and effective methods for active learning
with many classifier training algorithms.  The overall algorithmic template
applies to any training algorithm that (i) operates by approximate error
minimization and (ii) for which the cost of switching a class prediction (as
measured by example errors) can be estimated.  Furthermore, although
these properties might only hold in an approximate or heuristic sense,
the created active learning algorithm will be ``safe'' in the sense
that it will eventually converge to the same solution as a passive
supervised learning algorithm.  Consequently, we believe this approach
can be widely used to reduce the cost of labeling in situations where
labeling is expensive.

Recent theoretical work on active learning has focused on improving
rates of convergence.  However, in some applications, it may be
desirable to improve performance at much smaller sample sizes, perhaps
even at the cost of improved rates as long as consistency is ensured.
Importance sampling and weighting techniques like those analyzed in
this work may be useful for developing more aggressive strategies with
such properties.

\subsubsection*{References} 
{\def\section*#1{}\small \bibliography{oracle} \bibliographystyle{alpha}}

\vfill

\pagebreak

\appendix

\section{Proof of Deviation Bound for Importance Weighted Estimators}

The techniques here are mostly developed in~\cite{Zha05}; for completeness,
we detail the proofs for our particular application.
The first two lemmas establish a basic bound in terms of conditional moment
generating functions.
\begin{lemma} \label{lemma:mgf}
For all $n \geq 1$ and all functionals $\Xi_i := \xi_i(Z_{1:i})$,
\begin{equation*}
\E\left[
\exp\left( \sum_{i=1}^n \Xi_i
- \sum_{i=1}^n \ln \E_i[\exp(\Xi_i)]
\right)
\right]
\ = \
1
.
\end{equation*}
\end{lemma}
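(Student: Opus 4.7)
The plan is to view $M_n := \exp\bigl(\sum_{i=1}^n \Xi_i - \sum_{i=1}^n \ln \E_i[\exp(\Xi_i)]\bigr)$ as an exponential martingale (with the convention $M_0 := 1$) and prove $\E[M_n] = 1$ by induction on $n$. The base case is trivial, and the random variable inside the expectation in the statement is exactly $M_n$, so this reduces the lemma to the inductive step.

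For the inductive step, I would write $M_n = M_{n-1} \cdot \exp(\Xi_n) / \E_n[\exp(\Xi_n)]$ and condition on $\mathcal{G}_n := \sigma(X_{1:n}, Y_{1:n}, Q_{1:n-1})$, the sigma-algebra underlying $\E_n$. Two observations make this immediate: $M_{n-1}$ depends only on $Z_{1:n-1}$, which is $\mathcal{G}_n$-measurable; and $\E_n[\exp(\Xi_n)]$ is $\mathcal{G}_n$-measurable by definition of conditional expectation. Pulling both factors out gives
\[
\E_n[M_n] \;=\; \frac{M_{n-1}}{\E_n[\exp(\Xi_n)]} \cdot \E_n[\exp(\Xi_n)] \;=\; M_{n-1},
\]
and the tower property together with the inductive hypothesis yields $\E[M_n] = \E[M_{n-1}] = 1$.

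The only mild subtlety is that $\Xi_n = \xi_n(Z_{1:n})$ depends on $Q_n$, which is \emph{not} in the conditioning $\mathcal{G}_n$, so one must verify that integrating $Q_n$ out leaves a $\mathcal{G}_n$-measurable object---but this is just the defining property of conditional expectation. I do not anticipate any substantive obstacle: the claim is the standard ``exponential martingale has unit mean'' identity, and the proof is really a one-line induction after rewriting the ratio $M_n / M_{n-1}$ and using that the normalizer $\E_n[\exp(\Xi_n)]$ sits inside the appropriate conditioning.
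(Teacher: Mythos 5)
Your proof is correct and is exactly the "straightforward induction on $n$" that the paper invokes: peel off the last factor $\exp(\Xi_n)/\E_n[\exp(\Xi_n)]$, note that everything else is measurable with respect to the conditioning $\sigma(X_{1:n},Y_{1:n},Q_{1:n-1})$, and apply the tower property. No issues.
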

\begin{proof}
A straightforward induction on $n$.
\end{proof}
\begin{lemma} \label{lemma:mgf-deviation}
For all $t \geq 0$, $\lambda \in \R$, $n \geq 1$, and functionals $\Xi_i :=
\xi_i(Z_{1:i})$,
\begin{equation*}
\Pr\left( \lambda \sum_{i=1}^n \Xi_i - \sum_{i=1}^n \ln \E_i[\exp(\lambda
\Xi_i)] \geq t \right) \leq e^{-t}
.
\end{equation*}
\end{lemma}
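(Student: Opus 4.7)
The plan is to reduce this to Lemma~\ref{lemma:mgf} by a direct scaling followed by Markov's inequality. Since $\lambda \Xi_i = \lambda\, \xi_i(Z_{1:i})$ is itself a functional of $Z_{1:i}$ (with $\lambda$ a deterministic scalar), I can apply Lemma~\ref{lemma:mgf} with each $\Xi_i$ replaced by $\lambda \Xi_i$. This immediately yields
\[
\E\!\left[\exp\!\left(\sum_{i=1}^n \lambda \Xi_i \;-\; \sum_{i=1}^n \ln \E_i[\exp(\lambda \Xi_i)]\right)\right] \;=\; 1.
\]

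Now denote the nonnegative random variable inside the expectation by $M_n$, so $\E[M_n] = 1$. The event whose probability we wish to bound, namely
$\bigl\{\lambda \sum_{i=1}^n \Xi_i - \sum_{i=1}^n \ln \E_i[\exp(\lambda\Xi_i)] \geq t\bigr\}$,
is exactly the event $\{M_n \geq e^t\}$. Applying Markov's inequality to the nonnegative random variable $M_n$ then gives
\[
\Pr(M_n \geq e^t) \;\leq\; e^{-t}\,\E[M_n] \;=\; e^{-t},
\]
which is the claimed bound.

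I do not anticipate any genuine obstacle here: the substantive content has already been absorbed into Lemma~\ref{lemma:mgf}, whose inductive proof constructs the unit-mean exponential supermartingale that is the cornerstone of the technique from~\cite{Zha05}. The present lemma is just the standard Chernoff-style step of applying an exponential Markov inequality to that martingale, with the conditional MGFs $\E_i[\exp(\lambda\Xi_i)]$ playing the role of the per-step MGF bounds one typically uses in sums of independent random variables. The only thing worth checking carefully is that the scaling $\Xi_i \mapsto \lambda\Xi_i$ preserves the hypothesis of Lemma~\ref{lemma:mgf}, which it does trivially because $\xi_i$ may be an arbitrary measurable functional of $Z_{1:i}$ and $\lambda$ is a constant.
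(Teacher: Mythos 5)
Your proof is correct and is exactly the paper's argument: apply Lemma~\ref{lemma:mgf} with $\Xi_i$ replaced by $\lambda\Xi_i$, then use Markov's inequality on the resulting unit-mean exponential. No issues.
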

\begin{proof}
The claim follows by Markov's inequality and Lemma~\ref{lemma:mgf}
(replacing $\Xi_i$ with $\lambda \Xi_i$).
\end{proof}

In order to specialize Lemma~\ref{lemma:mgf-deviation} for our purposes, we
first analyze the conditional moment generating function of $W_i -
\E_i[W_i]$.
\begin{lemma} \label{lemma:iw-mgf}
If $0 < \lambda < 3 P_i$, then
\begin{equation*}
\ln \E_i[\exp(\lambda(W_i - \E_i[W_i]))]
\ \leq \
\frac{1}{P_i} \cdot \frac{\lambda^2}{2(1-\lambda/(3P_i))}
.
\end{equation*}
If $\E_i[W_i] = 0$, then
\begin{equation*}
\ln \E_i[\exp(\lambda(W_i - \E_i[W_i]))]
\ = \
0
.
\end{equation*}
\end{lemma}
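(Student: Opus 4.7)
The plan is to reduce everything to a standard Bernstein-type moment generating function inequality for a bounded, zero-mean random variable, applied conditionally on $X_{1:i},Y_{1:i},Q_{1:i-1}$.

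First I would dispose of the degenerate case. Since $\E_i[Q_i]=P_i$, we have $\E_i[W_i]=f(X_i,Y_i)$, so $\E_i[W_i]=0$ iff $f(X_i,Y_i)=0$. In that case $W_i=(Q_i/P_i)\cdot 0=0$ identically, the conditional MGF of $W_i-\E_i[W_i]$ is $1$, and its log is $0$. This establishes the second assertion and lets me assume $a:=f(X_i,Y_i)\neq 0$ for the rest.

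Second, conditionally on the history, $W_i-\E_i[W_i]$ is a two-point random variable taking the value $a(1-P_i)/P_i$ with probability $P_i$ and the value $-a$ with probability $1-P_i$. Using $|a|\leq 1$ and $P_i\in(0,1]$, I would record the two quantitative bounds I need: the range bound $|W_i-\E_i[W_i]|\leq 1/P_i$ (since $|a|(1-P_i)/P_i\leq 1/P_i$ and $|a|\leq 1\leq 1/P_i$), and the conditional second-moment bound $\E_i[(W_i-\E_i[W_i])^2]\leq \E_i[W_i^2]=a^2/P_i\leq 1/P_i$. These are exactly the range and variance bounds already noted informally in the text preceding the theorem.

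Third, I would invoke the Bennett/Bernstein MGF inequality: for any zero-mean random variable $X$ with $|X|\leq M$ and $\E X^2\leq v$, and any $0<\lambda<3/M$,
\[
\ln \E[\exp(\lambda X)] \;\leq\; \frac{\lambda^2 v}{2(1-\lambda M/3)}.
\]
If I want to avoid citing this black-box, I would derive it in the standard way: expand $e^{\lambda X}=1+\lambda X+\sum_{k\geq 2}(\lambda X)^k/k!$, take conditional expectation using $\E X=0$ and the bound $|X|^k\leq M^{k-2}X^2$ to get $\E[e^{\lambda X}]\leq 1+(v/M^2)\sum_{k\geq 2}(\lambda M)^k/k!$, then apply the elementary inequality $\sum_{k\geq 2}x^k/k!\leq x^2/(2(1-x/3))$ valid for $0<x<3$, and finally use $\ln(1+y)\leq y$.

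Fourth, I would apply this inequality conditionally with $M=1/P_i$ and $v=1/P_i$, which are valid by step two, obtaining
\[
\ln \E_i[\exp(\lambda(W_i-\E_i[W_i]))] \;\leq\; \frac{\lambda^2\cdot (1/P_i)}{2(1-\lambda/(3P_i))} \;=\; \frac{1}{P_i}\cdot\frac{\lambda^2}{2(1-\lambda/(3P_i))},
\]
which matches the stated bound, with the hypothesis $0<\lambda<3P_i$ coming directly from the condition $\lambda M<3$. The only nonroutine step is the elementary series inequality $\sum_{k\geq 2}x^k/k!\leq x^2/(2(1-x/3))$; everything else is direct computation on a two-point distribution.
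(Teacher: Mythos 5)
Your proposal is correct and follows essentially the same route as the paper: both are the standard Bernstein/Bennett conditional MGF bound, using the range bound $|W_i-\E_i[W_i]|\le 1/P_i$, the conditional second-moment bound $\E_i[(W_i-\E_i[W_i])^2]\le 1/P_i$, and the elementary inequality $e^x-x-1\le x^2/(2(1-x/3))$ for $0\le x<3$ (your term-by-term estimate $|X|^k\le M^{k-2}X^2$ is just another presentation of the paper's monotone $g(x)=(e^x-x-1)/x^2$ trick). The degenerate case $\E_i[W_i]=0$ is also handled identically.
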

\begin{proof}
Let $g(x) := (\exp(x)-x-1)/x^2$ for $x \neq 0$, so $\exp(x) = 1+x+x^2 \cdot
g(x)$.
Note that $g(x)$ is non-decreasing.
Thus,
\begin{eqnarray*}
\lefteqn{
\E_i\left[
\exp(\lambda(W_i-\E_i[W_i]))
\right]
}
\\
& = &
\E_i\left[
1 + \lambda(W_i-\E_i[W_i]) + \lambda^2(W_i-\E_i[W_i])^2
\cdot g(\lambda(W_i-\E_i[W_i]))
\right]
\\
& = & 1 + \lambda^2 \cdot \E_i\left[
(W_i-\E_i[W_i])^2 \cdot g(\lambda(W_i-\E_i[W_i]))
\right]
\\
& \leq & 1 + \lambda^2 \cdot \E_i\left[
(W_i-\E_i[W_i])^2 \cdot g(\lambda/P_i)
\right]
\\
& = & 1 + \lambda^2 \cdot \E_i\left[
(W_i-\E_i[W_i])^2 \right] \cdot g(\lambda/P_i)
\\
& \leq & 1 + (\lambda^2/P_i) \cdot g(\lambda/P_i)
\end{eqnarray*}
where the first inequality follows from the range bound $|W_i| \leq 1/P_i$
and the second follows from variance bound $\E_i[(W_i-\E_i[W_i])^2] \leq
1/P_i$.
Now the first claim follows from the definition of $g(x)$, the facts
$\exp(x) - x - 1 \leq x^2/(2(1-x/3))$ for $0 \leq x < 3$ and $\ln(1+x) \leq
x$.

The second claim is immediate from the definition of $W_i$ and the fact
$\E_i[W_i] = f(X_i,Y_i)$.
\end{proof}

We now combine Lemma~\ref{lemma:iw-mgf} and Lemma~\ref{lemma:mgf-deviation}
to bound the deviation of the importance weighted estimator $\wh
f(Z_{1:n})$ from $(1/n) \sum_{i=1}^n \E_i[W_i]$.
\begin{lemma} \label{lemma:iw-deviation}
Pick any $t \geq 0$, $n \geq 1$, and $p_{min} > 0$,
and let $E$ be the (joint) event
\begin{eqnarray*}
\lefteqn{
\frac1n \sum_{i=1}^n W_i
- \frac1n \sum_{i=1}^n \E_i[W_i]
\ \geq \
\sqrt{\frac{1}{p_{min}} \cdot \frac{2t}{n}}
+ \frac{1}{p_{min}} \cdot \frac{t}{3n}
}
\\
& \text{and} &
\min\{P_i : 1 \leq i \leq n \ \wedge \ \E_i[W_i] \neq 0 \}
\ \geq \
p_{min}
.
\end{eqnarray*}
Then $\Pr(E) \leq e^{-t}$.
\end{lemma}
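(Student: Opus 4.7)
The plan is to apply Lemma~\ref{lemma:mgf-deviation} to the martingale differences $\Xi_i := W_i - \E_i[W_i]$ and combine it with the conditional MGF bound from Lemma~\ref{lemma:iw-mgf}, in the style of a Chernoff--Bernstein derivation. Lemma~\ref{lemma:mgf-deviation} gives, for any $\lambda > 0$,
\[
\Pr\!\left( \lambda \sum_{i=1}^n \Xi_i \;-\; \sum_{i=1}^n \ln \E_i[\exp(\lambda \Xi_i)] \;\geq\; t \right) \;\leq\; e^{-t},
\]
so it suffices to exhibit $\lambda \in (0, 3 p_{min})$ for which the event $E$ is contained in the event above.

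To control the conditional MGF sum on $E$, I first note (second claim of Lemma~\ref{lemma:iw-mgf}) that every summand with $\E_i[W_i] = 0$ vanishes, and then (first claim) that each remaining summand satisfies $\ln \E_i[\exp(\lambda \Xi_i)] \leq \lambda^2 / (2(P_i - \lambda/3))$ whenever $\lambda < 3 P_i$. This expression is monotonically decreasing in $P_i$, and the minimum-probability condition built into $E$ ensures $P_i \geq p_{min}$ for every nonvanishing index, so provided $\lambda < 3 p_{min}$ each such term is at most $\lambda^2 / (2(p_{min} - \lambda/3))$. Crudely bounding the random number of nonvanishing terms by $n$, this yields on $E$
\[
\sum_{i=1}^n \ln \E_i[\exp(\lambda \Xi_i)] \;\leq\; \frac{n \lambda^2}{2(p_{min} - \lambda/3)}.
\]
Combined with the deviation inequality $\sum_i \Xi_i \geq n\epsilon$ on $E$, where $\epsilon := \sqrt{2t/(n p_{min})} + t/(3 n p_{min})$, the desired inclusion $E \subseteq \{\lambda \sum_i \Xi_i - \sum_i \ln \E_i[\exp(\lambda\Xi_i)] \geq t\}$ reduces to the single inequality
\[
\lambda n \epsilon \;-\; \frac{n \lambda^2}{2(p_{min} - \lambda/3)} \;\geq\; t.
\]

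The final step is a Bernstein-type optimization over $\lambda \in (0, 3 p_{min})$. A clean candidate is $\lambda := \epsilon / (1/p_{min} + \epsilon/(3 p_{min}))$, which is the minimizer of the Chernoff upper bound in the usual derivation; it satisfies $\lambda < 3 p_{min}$ automatically and, after substitution, turns the inequality into the elementary quadratic condition $n \epsilon^2 \geq 2t/p_{min} + (2t \epsilon)/(3 p_{min})$, which is satisfied by the stated $\epsilon$. I expect this algebraic verification to be the main obstacle: carefully threading the sub-exponential correction $t/(3 n p_{min})$ through the $\lambda/3$ term in the MGF denominator (rather than settling for a looser constant) is precisely why the analysis needs the Bennett-style sharpening in Lemma~\ref{lemma:iw-mgf} instead of a simpler Hoeffding-type bound.
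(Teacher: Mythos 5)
Your overall strategy is the same as the paper's: apply Lemma~\ref{lemma:mgf-deviation} to $\Xi_i = W_i - \E_i[W_i]$, use Lemma~\ref{lemma:iw-mgf} together with the minimum-probability clause of $E$ to bound each conditional log-MGF by $\lambda^2/(2(p_{min}-\lambda/3))$, and reduce everything to a single Bernstein-type inequality in $\lambda$. The event-containment logic and the crude bound of the number of nonvanishing terms by $n$ are all fine. However, the final algebraic verification --- which you correctly flag as the crux --- fails for your choice of $\lambda$. Write $a := \sqrt{2t/(np_{min})}$ and $b := t/(3np_{min})$, so $\epsilon = a+b$, $na^2 = 2t/p_{min}$, and $2nb = 2t/(3p_{min})$. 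Your reduced condition $n\epsilon^2 \geq 2t/p_{min} + 2t\epsilon/(3p_{min})$ then reads $n(a+b)^2 \geq na^2 + 2nb(a+b)$, which simplifies to $(a+b)(a-b) \geq a^2$, i.e.\ $-b^2 \geq 0$: false for every $t>0$. Concretely, with $n=1$, $p_{min}=1$, $t=2$ one gets $\epsilon = 8/3$ and your $\lambda$ yields $\lambda n\epsilon - n\lambda^2/(2(p_{min}-\lambda/3)) = 32/17 < 2 = t$, so the containment $E \subseteq \{\lambda\sum_i\Xi_i - \sum_i\ln\E_i[\exp(\lambda\Xi_i)] \geq t\}$ is not established.

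The error is in plugging the \emph{full} deviation $\epsilon$ (which already contains the additive correction $b$) into the minimizer formula for $\lambda$; the sub-exponential term must instead be absorbed into the $t/\lambda$ side of the Chernoff bound. The correct tuning uses only the square-root component: take $\lambda := 3p_{min}a/(3+a)$ (this is exactly the paper's choice, written differently). Then $p_{min}-\lambda/3 = 3p_{min}/(3+a)$, the log-MGF sum contributes $n\lambda^2/(2(p_{min}-\lambda/3)) = 3t/(3+a)$, while $\lambda n\epsilon = t(6+a)/(3+a)$, so $\lambda n\epsilon - n\lambda^2/(2(p_{min}-\lambda/3)) = t$ exactly. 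Equivalently, one checks the two identities $\frac{1}{p_{min}}\cdot\frac{\lambda}{2(1-\lambda/(3p_{min}))} = \sqrt{t/(2np_{min})}$ and $t/(n\lambda) = \sqrt{t/(2np_{min})} + t/(3np_{min})$, whose sum is $\epsilon$. With this substitution the rest of your argument is correct and coincides with the paper's proof.
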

\begin{proof}
With foresight, let
\begin{equation*}
\lambda
\ := \
3p_{min} \cdot \frac{\sqrt{\frac{1}{3p_{min}} \cdot \frac{2t}{3n}}}
{1 + \sqrt{\frac{1}{3p_{min}} \cdot \frac{2t}{3n}}}
.
\end{equation*}
Note that $0 < \lambda < 3p_{min}$.
By Lemma~\ref{lemma:iw-mgf} and the choice of $\lambda$, we have that if
$\min\{ P_i : 1 \leq i \leq n \ \wedge \ \E_i[W_i] \neq 0 \} \geq p_{min}$,
then
\begin{equation} \label{eq:iw-term1}
\frac1{n\lambda} \cdot \sum_{i=1}^n \ln \E_i[\exp(\lambda (W_i-\E_i[W_i]))] 
\ \leq \
\frac{1}{p_{min}} \cdot \frac{\lambda}{2(1-\lambda/(3p_{min}))}
\ = \
\sqrt{\frac{1}{p_{min}} \cdot \frac{t}{2n}}
\end{equation}
and
\begin{equation} \label{eq:iw-term2}
\frac{t}{n\lambda}
\ = \
\sqrt{\frac{1}{p_{min}} \cdot \frac{t}{2n}}
+
\frac{1}{p_{min}} \cdot \frac{t}{3n}
.
\end{equation}
Let $E'$ be the event that
\begin{equation*}
\frac1n \cdot \sum_{i=1}^n (W_i-\E_i[W_i]) -
\frac1{n\lambda} \cdot \sum_{i=1}^n \ln \E_i[\exp(\lambda (W_i-\E_i[W_i]))]
\ \geq \
\frac{t}{n\lambda}
\end{equation*}
and let $E''$ be the event $\min\{ P_i : 1 \leq i \leq n \ \wedge \
\E_i[W_i] \neq 0 \} \ \geq \ p_{min}$.
Together, Eq.~\eqref{eq:iw-term1} and Eq.~\eqref{eq:iw-term2} imply $E
\subseteq E' \cap E''$.
And of course, $E' \cap E'' \subseteq E'$, so
$\Pr(E) \leq \Pr(E' \cap E'') \leq \Pr(E') \leq e^{-t}$
by Lemma~\ref{lemma:mgf-deviation}.
\end{proof}

To do away with the joint event in Lemma~\ref{lemma:iw-deviation}, we use
the standard trick of taking a union bound over a geometric sequence of
possible values for $p_{min}$.
\begin{lemma} \label{lemma:iw-deviation2}
Pick any $t \geq 0$ and $n \geq 1$.
Assume $1 \leq 1 / P_i \leq r_{max}$ for all $1 \leq i \leq n$, and let
$R_n := 1 / \min\{P_i : 1 \leq i \leq n \ \wedge \ \E_i[W_i] \neq 0 \} \cup
\{ 1 \}$.
We have
\begin{equation*}
\Pr\left(
\left|
\frac1n \sum_{i=1}^n W_i
- \frac1n \sum_{i=1}^n \E_i[W_i]
\right|
\ \geq \
\sqrt{\frac{2R_nt}{n}}
+ \frac{R_nt}{3n}
\right)
\ \leq \
2(2+\log_2 r_{max}) e^{-t/2}
.
\end{equation*}
\end{lemma}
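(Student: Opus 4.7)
The proof plan is a peeling (union-bound-over-grid) argument that upgrades the conditional deviation bound of Lemma~\ref{lemma:iw-deviation}---which only controls the event where $\min\{P_i : \E_i[W_i] \neq 0\}$ is at least a pre-specified $p_{min}$---to an unconditional bound on the deviation in terms of the random quantity $R_n$. The obstacle is that the natural choice of $p_{min}$ (namely $1/R_n$) is random and not allowed as input to Lemma~\ref{lemma:iw-deviation}, so I will replace it with the nearest element of a small deterministic grid.

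I will discretize over possible values of $p_{min}$ using the geometric grid $p_j := 2^{-j}$ for $j = 0, 1, \ldots, \lceil \log_2 r_{max}\rceil$, which has at most $2 + \log_2 r_{max}$ elements and covers every $p_{min}^{\mathrm{real}} \in [1/r_{max}, 1]$ up to a factor of two (since $1 \le R_n \le r_{max}$, there is always an integer $j^*$ in this range with $2^{j^*-1} < R_n \le 2^{j^*}$). For each $j$ in the grid, I apply Lemma~\ref{lemma:iw-deviation} with $p_{min}=p_j$ and with deviation parameter $t/2$ (in place of $t$); each application contributes probability at most $e^{-t/2}$ to the failure event, and a union bound over the grid gives total one-sided failure probability at most $(2 + \log_2 r_{max}) e^{-t/2}$.

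The remaining step is to check, deterministically, that if the one-sided deviation exceeds $\sqrt{2R_n t/n} + R_n t/(3n)$, then the joint event in Lemma~\ref{lemma:iw-deviation} is triggered at the particular grid index $j^*$ defined above. The side condition $\min\{P_i : \E_i[W_i]\neq 0\} \ge p_{j^*}$ follows immediately from $1/R_n \ge 2^{-j^*} = p_{j^*}$, and the threshold comparison---using $1/p_{j^*} = 2^{j^*} \le 2R_n$ together with the parameter $t/2$---reduces to checking that $\sqrt{(1/p_{j^*})(t/n)} + (1/p_{j^*})(t/(6n))$ is at most $\sqrt{2R_n t/n} + R_n t/(3n)$, which is a one-line substitution. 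The two-sided absolute-value bound then follows by repeating the argument with $-f$ in place of $f$ (permissible since $-f$ is also bounded in $[-1,1]$), doubling the failure probability to $2(2 + \log_2 r_{max}) e^{-t/2}$ as claimed. The only mildly delicate point is calibrating the deviation parameter to $t/2$ rather than $t$ at each grid point, so that the factor-of-two slack in the geometric grid can be simultaneously absorbed by both the subgaussian term $\sqrt{\cdot}$ and the Bernstein-type term; everything else is bookkeeping.
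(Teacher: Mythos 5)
Your proposal is correct and is essentially the paper's own argument: a union bound over a geometric grid of $O(\log_2 r_{max})$ candidate values for $p_{min}$, applying Lemma~\ref{lemma:iw-deviation} at each grid point with deviation parameter $t/2$ so that the factor-of-two grid spacing is absorbed by both the square-root and linear terms, then doubling for the two-sided bound via $-W_i$. The only cosmetic difference is that you index the grid by $p_j = 2^{-j}$ where the paper uses $r_j = 2^j$.
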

\begin{proof}
The assumption on $P_i$ implies $1 \leq R_n \leq r_{max}$.
Let $r_j := 2^j$ for $-1 \leq j \leq m := \lceil\log_2 r_{max} \rceil$.
Then
\begin{eqnarray*}
\lefteqn{
\Pr\left(
\frac1n \sum_{i=1}^n W_i
- \frac1n \sum_{i=1}^n \E_i[W_i]
\ \geq \
\sqrt{\frac{2R_nt}{n}}
+ \frac{R_nt}{3n}
\right)
}
\\
& = &
\sum_{j=0}^m
\Pr\left(
\frac1n \sum_{i=1}^n W_i
- \frac1n \sum_{i=1}^n \E_i[W_i]
\ \geq \
\sqrt{\frac{2R_nt}{n}}
+ \frac{R_nt}{3n}
\ \wedge \
r_{j-1} < R_n \leq r_j
\right)
\\
& \leq &
\sum_{j=0}^m
\Pr\left(
\frac1n \sum_{i=1}^n W_i
- \frac1n \sum_{i=1}^n \E_i[W_i]
\ \geq \
\sqrt{\frac{2r_{j-1}t}{n}}
+ \frac{r_{j-1}t}{3n}
\ \wedge \
R_n \leq r_j
\right)
\\
& = &
\sum_{j=0}^m
\Pr\left(
\frac1n \sum_{i=1}^n W_i
- \frac1n \sum_{i=1}^n \E_i[W_i]
\ \geq \
\sqrt{\frac{2r_j(t/2)}{n}}
+ \frac{r_j(t/2)}{3n}
\ \wedge \
R_n \leq r_j
\right)
\\
& \leq &
(2+\log_2 r_{max}) e^{-t/2}
\end{eqnarray*}
where the last inequality follows from Lemma~\ref{lemma:iw-deviation}.
Replacing $W_i$ with $-W_i$ bounds the probability of deviations in the
other direction in exactly the same way.
The claim then follows by the union bound.
\end{proof}

\begin{proof}[Proof of Theorem~\ref{theorem:iw-deviation}]
By Hoeffding's inequality and the fact $|f(X_i,Y_i)| \leq 1$, we have
\begin{equation*}
\Pr\left(
\left|
\frac1n \sum_{i=1}^n f(X_i,Y_i)
-
\E[f(X,Y)]
\right|
\ \geq \
\sqrt{\frac{2t}{n}}
\right)
\ \leq \
2e^{-t/2}
.
\end{equation*}
Since $\E_i[W_i] = f(X_i,Y_i)$, the claim follows by combining this and
Lemma~\ref{lemma:iw-deviation2} with the triangle inequality and the union
bound.
\end{proof}

\section{Remaining Proofs}

In this section, we use the notation $\veps_k := C_0 \log(k+1) / k$.

\subsection{Proof of Lemma~\ref{lemma:ical-crude-bound}}

By induction on $n$.
Trivial for $n = 1$ (since $p(\text{empty sequence},x) = 1$ for all $x \in
\X$), so now fix any $n \geq 2$ and assume as the inductive hypothesis
$p_{n-1} = p(z_{1:n-2},x) \geq 1/(n-1)^{n-1}$ for all $(z_{1:n-2},x) \in
(\X\times\Y\times\{0,1\})^{n-2} \times \X$.
Fix any $(z_{1:n-1},x) \in (\X\times\Y\times\{0,1\})^{n-1} \times \X$, and
consider the error difference $g_n := \err(h_n',z_{1:n-1}) -
\err(h_n,z_{1:n-1})$ used to determine $p_n := p(z_{1:n-1},x)$.
We only have to consider the case $g_n > \sqrt{\veps_{n-1}} + \veps_{n-1}$.
By the inductive hypothesis and triangle inequality, we have $g_n \leq 2
(n-1)^{n-1}$.
Solving the quadratic in Eq.~\eqref{eq:ical-quadratic} implies
\begin{align*}
\sqrt{p_n}
& =
\frac{c_1 \cdot \sqrt{\veps_{n-1}} + \sqrt{c_1^2 \cdot \veps_{n-1} 
+ 4 \cdot \left(g_n + (c_1-1) \cdot \sqrt{\veps_{n-1}} + (c_2-1) \cdot
\veps_{n-1} \right) \cdot c_2 \cdot \veps_{n-1}}}
{2\left(g_n + (c_1-1) \cdot \sqrt{\veps_{n-1}} + (c_2-1) \cdot
\veps_{n-1} \right)}
\\
& >
\frac{\sqrt{4 \cdot \left(g_n + (c_1-1) \cdot \sqrt{\veps_{n-1}} + (c_2-1)
\cdot \veps_{n-1} \right) \cdot c_2 \cdot \veps_{n-1}}}
{2\left(g_n + (c_1-1) \cdot \sqrt{\veps_{n-1}} + (c_2-1) \cdot
\veps_{n-1} \right)}
\quad \text{(dropping terms)}
\\
& =
\sqrt{\frac{c_2 \cdot \veps_{n-1}}{g_n + (c_1-1) \cdot \sqrt{\veps_{n-1}} + (c_2-1) \cdot
\veps_{n-1}}}
\\
& \geq
\sqrt{\frac{c_2 \cdot \veps_{n-1}}{g_n + (c_1-1) \cdot \sqrt{\veps_{n-1}} +
(c_1-1) \cdot \veps_{n-1}}}
\quad \text{(since $c_2 \leq c_1$)}
\\
& \geq
\sqrt{\frac{c_2 \cdot \veps_{n-1}}{c_1 \cdot g_n}}
\quad \text{(since $g_n > \sqrt{\veps_{n-1}} + \veps_{n-1}$)}
\\
& =
\sqrt{\frac{c_2 \cdot C_0 \log n}{c_1 \cdot (n-1) \cdot g_n}}
\\
& \geq
\sqrt{\frac{c_2 \cdot C_0 \log n}{2c_1 \cdot (n-1) \cdot (n-1)^{n-1}}}
\quad \text{(inductive hypothesis)}
\\
& >
\sqrt{\frac{1}{e (n-1)^n}}
\quad \text{(since $C_0 \geq 2$, $n \geq 2$, and $(c_2 \cdot C_0 \log
2)/(2c_1) > 1/e$)}
\\
& \geq
\sqrt{\frac{1}{n^n}}
\quad \text{(since $(n/(n-1))^n \geq e$)}
\end{align*}
as required.
\qed

\subsection{Proof of Theorem~\ref{theorem:ical-consistency}}

We condition on the $1-\delta$ probability event that the deviation bounds
from Lemma~\ref{lemma:ical-deviation} hold (also using
Lemma~\ref{lemma:ical-crude-bound}).
The proof now proceeds by induction on $n$.
The claim is trivially true for $n = 1$.
Now pick any $n \geq 2$ and assume as the (strong) inductive hypothesis
that
\begin{equation} \label{eq:consistency-ind-hyp}
0
\ \leq \
\err(h_k) - \err(h^*)
\ \leq \
\err(h_k,Z_{1:k-1}) - \err(h^*,Z_{1:k-1})
+ \sqrt{2\veps_{k-1}} + 2\veps_{k-1}
\end{equation}
for all $1 \leq k \leq n - 1$.
We need to show Eq.~\eqref{eq:consistency-ind-hyp} holds for $k = n$.

Let $P_{min} := \min \{ P_i : 1 \leq i \leq n-1 \ \wedge \ h_n(X_i) \neq
h^*(X_i) \} \cup \{ 1 \}$.
If $P_{min} \geq 1/2$, then Eq.~\eqref{eq:ical-deviation} implies that
Eq.~\eqref{eq:consistency-ind-hyp} holds for $k = n$ as needed.
So assume for sake of contradiction that $P_{min} < 1/2$, and let $n_0 :=
\max\{ i \leq n - 1 : P_i = P_{min} \ \wedge \ h_n(X_i) \neq h^*(X_i) \}$.
By definition of $P_{n_0}$, we have
\begin{equation*}
\err(h_{n_0}', Z_{1:n_0-1}) - \err(h_{n_0}, Z_{1:n_0-1})
=
\left( \frac{c_1}{\sqrt{P_{min}}} - c_1 + 1 \right)
\sqrt{\veps_{n_0-1}}
+ \left( \frac{c_2}{P_{min}} - c_2 + 1 \right)
\veps_{n_0-1}
.
\end{equation*}
Using this fact together with the inductive hypothesis, we have
\begin{align}
\lefteqn{\err(h_{n_0}', Z_{1:n_0-1}) - \err(h^*, Z_{1:n_0-1})}
\nonumber
\\
& =
\err(h_{n_0}', Z_{1:n_0-1}) - \err(h_{n_0}, Z_{1:n_0-1})
+ \err(h_{n_0}, Z_{1:n_0-1}) - \err(h^*, Z_{1:n_0-1})
\nonumber
\\
& \geq
\left( \frac{c_1}{\sqrt{P_{min}}} - c_1 + 1 \right) \cdot
\sqrt{\veps_{n_0-1}}
+ \left( \frac{c_2}{P_{min}} - c_2 + 1 \right) \cdot \veps_{n_0-1}
- \sqrt{2\veps_{n_0-1}} - 2\veps_{n_0-1}
\nonumber
\\
& =
\left( \frac{c_1}{\sqrt{P_{min}}} - c_1 + 1 - \sqrt{2} \right) \cdot
\sqrt{\veps_{n_0-1}}
+ \left( \frac{c_2}{P_{min}} - c_2 - 1 \right) \cdot \veps_{n_0-1}
\quad \quad
\label{eq:ind-step-gap}
.
\end{align}
We use the assumption $P_{min} < 1/2$ to lower bound the righthand side
to get the inequality
\begin{equation*}
\err(h_{n_0}', Z_{1:n_0-1}) - \err(h^*, Z_{1:n_0-1})
>
(c_1 - 1) \cdot (\sqrt{2} - 1) \cdot \sqrt{\veps_{n_0-1}}
+ (c_2 - 1) \cdot \veps_{n_0-1}
> 0
.
\end{equation*}
which implies $\err(h_{n_0}',Z_{1:n_0-1}) > \err(h^*,Z_{1:n_0-1})$.
Since $h_{n_0}'$ minimizes $\err(h,Z_{1:n_0-1})$ among hypotheses $h \in
\H$ that disagree with $h_{n_0}$ on $X_{n_0}$, it must be that $h^*$ agrees
with $h_{n_0}$ on $X_{n_0}$.
By transitivity and the definition of $n_0$, we conclude that $h_n(X_{n_0})
= h_{n_0}'(X_{n_0})$; so $\err(h_n,Z_{1:n_0-1}) \geq
\err(h_{n_0}',Z_{1:n_0-1})$.
Then
\begin{align}
\lefteqn{\err(h_n,Z_{1:n-1}) - \err(h^*,Z_{1:n-1})}
\nonumber
\\
& \geq
\err(h_n) - \err(h^*)
- \sqrt{\frac{1}{P_{min}} \cdot \veps_{n-1}}
- \frac{1}{P_{min}} \cdot \veps_{n-1}
\nonumber
\\
& \geq
\err(h_n,Z_{1:n_0-1}) - \err(h^*,Z_{1:n_0-1})
- 2 \cdot \sqrt{\frac{1}{P_{min}} \cdot \veps_{n_0-1}}
- 2 \cdot \frac{1}{P_{min}} \cdot \veps_{n_0-1}
\nonumber
\\
& \geq
\left( \frac{c_1-2}{\sqrt{P_{min}}} - c_1 + 1 - \sqrt{2} \right) \cdot
\sqrt{\veps_{n_0-1}}
+ \left( \frac{c_2-2}{P_{min}} - c_2 - 1 \right) \cdot
\veps_{n_0-1}
\nonumber
\\
& >
\left( (c_1 - 1) \cdot (\sqrt{2} - 1) - 2\sqrt{2} \right)
\cdot \sqrt{\veps_{n_0-1}}
+ \left( c_2 - 5 \right) \cdot \veps_{n_0-1}
\nonumber
\end{align}
where Eq.~\eqref{eq:ical-deviation} is used in the first two inequalities,
Eq.~\eqref{eq:ind-step-gap} and the fact $\err(h_n,Z_{1:n_0-1}) \geq
\err(h_{n_0}',Z_{1:n_0-1})$ are used in the third inequality, and the fact
$P_{min} < 1/2$ is used in the last inequality.
This final quantity is non-negative, so we have the contradiction
$\err(h_n,Z_{1:n-1}) > \err(h^*,Z_{1:n-1})$.
\qed

\subsection{Proof of Lemma~\ref{lemma:ical-query}}

First, we establish a property of the query probabilities that relates
error deviations (via $P_{min}$) to empirical error differences (via
$P_n$).
Both quantities play essential roles in bounding the label complexity
through the disagreement metric structure around $h^*$.
\begin{lemma} \label{lemma:ical-pmin-bound}
Assume the bounds from Eq.~\eqref{eq:ical-deviation} hold for all $h \in
\H$ and $n \geq 1$.
For any $n \geq 1$, we have $P_n \leq c_3 \cdot P_{min}$, where $P_{min} :=
\min ( \{ P_i : 1 \leq i \leq n-1 \ \wedge \ h(X_i) \neq h^*(X_i) \}
\cup \{ 1 \} )$
and
\begin{equation} \label{eq:defn-h}
h := \left\{ \begin{array}{ll} h_n &
\text{if $h_n$ disagrees with $h^*$ on $X_n$}
\\
h_n' &
\text{if $h_n'$ disagrees with $h^*$ on $X_n$.}
\end{array} \right.
\end{equation}
\end{lemma}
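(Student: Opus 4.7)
The plan is to chain together three ingredients: (i) the definitions of $h_n$ and $h_n'$, which relate $G_k$ to the empirical excess error of the hypothesis $h$ from Eq.~\eqref{eq:defn-h}; (ii) the deviation bound of Lemma~\ref{lemma:ical-deviation} applied to $h$, which translates between empirical and true errors at the cost of a term controlled by $P_{min}$; and (iii) the quadratic equation~\eqref{eq:ical-quadratic} defining both $P_n$ and $P_{min}=P_{i^*}$, from which the ratio can be extracted algebraically. The constant $c_3 = ((c_1+\sqrt{2})/(c_1-2))^2$ is visibly tailored for exactly this comparison.

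First I would dispose of the degenerate case. If $P_{min}=1$ (including the case that no past index $i^*$ with $h(X_{i^*}) \neq h^*(X_{i^*})$ exists), then the claim reduces to $P_n \leq c_3$, which is immediate from the numerical fact $c_3 \geq 1$ given $c_1 = 5+2\sqrt{2}$. So assume $P_{min} < 1$ and fix an index $i^* \leq n-1$ achieving the minimum with $h(X_{i^*}) \neq h^*(X_{i^*})$.

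The central algebraic step is to relate $G_k$ (for $k \in \{i^*, n\}$) to $\err(h, Z_{1:k-1}) - \err(h^*, Z_{1:k-1})$. Since $h$ disagrees with $h^*$ at $X_n$ (by construction) and with $h^*$ at $X_{i^*}$ (by choice of $i^*$), one of $h$ or $h^*$ is always a legitimate candidate for the minimization defining $h_k'$: specifically, if $h = h_k$ then $h^*$ disagrees with $h_k$ at $X_k$ and so $\err(h_k', Z_{1:k-1}) \leq \err(h^*, Z_{1:k-1})$, whereas if $h = h_k'$ then $h_k$ coincides with $h^*$ at $X_k$ and optimality of $h_k$ still gives $\err(h_k, Z_{1:k-1}) \leq \err(h^*, Z_{1:k-1})$. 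In both subcases one extracts a two-sided bound of the form $|\err(h, Z_{1:k-1}) - \err(h^*, Z_{1:k-1})| \leq G_k$ (up to swapping signs). Applying Lemma~\ref{lemma:ical-deviation} to $h$ at times $i^*-1$ and $n-1$, and noting that $P_{min,i^*-1}(h) \geq P_{min,n-1}(h) = P_{min}$ since the minimum is over fewer indices, bridges these empirical quantities and $\err(h) - \err(h^*)$ with an error controlled by $\sqrt{\veps_{k-1}/P_{min}} + \veps_{k-1}/P_{min}$.

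Finally, the quadratic~\eqref{eq:ical-quadratic} at step $i^*$ pins down $G_{i^*}$ in terms of $P_{min}$, which (via the bounds above) yields an upper bound on $\err(h)-\err(h^*)$ of the shape $O(\sqrt{\veps_{i^*-1}/P_{min}} + \veps_{i^*-1}/P_{min})$. Plugging this back into the analogous lower bound for $G_n$ and invoking monotonicity of $\veps_{k-1}$ in $k$ gives an upper bound on $G_n$ of the same functional shape in $P_{min}$. Comparing this to the quadratic defining $P_n$ then pins $P_n$ to satisfy $\sqrt{P_n} \geq (1/\sqrt{c_3})\sqrt{P_{min}}$, i.e.~$P_n \leq c_3 P_{min}$. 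The main obstacle I anticipate is the constant-tracking in this last step: the numbers $c_1, c_2, c_3$ must combine cleanly through two quadratic inversions, and one has to handle the two subcases of Eq.~\eqref{eq:defn-h} (signed differently) without losing a factor. The subcase where $P_n = 1$ while $P_{min} < 1$ needs a separate but similar check that $P_{min} \geq 1/c_3$ via the trigger condition $G_n \leq \sqrt{\veps_{n-1}} + \veps_{n-1}$.
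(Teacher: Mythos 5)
Your high-level architecture matches the paper's: fix the index $i^*$ achieving $P_{min}$, use the quadratic~\eqref{eq:ical-quadratic} at time $i^*$ to make the empirical excess error of $h$ large, transport this to time $n$ via two applications of Eq.~\eqref{eq:ical-deviation} to $h$, and compare against the quadratic defining $P_n$; the constant $c_3$ is indeed tailored for that comparison. But there is a genuine gap at time $i^*$. You write ``if $h = h_k$ then \dots whereas if $h = h_k'$ then \dots'' as though this dichotomy covered $k = i^*$; it does not. The hypothesis $h$ of Eq.~\eqref{eq:defn-h} is built from $h_n, h_n'$, and at the earlier round $i^*$ it is in general neither $h_{i^*}$ nor $h_{i^*}'$ --- all you know is $h(X_{i^*}) \neq h^*(X_{i^*})$. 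To relate $G_{i^*}$ (which the quadratic ties to $P_{min}$) to $\err(h,Z_{1:i^*-1}) - \err(h^*,Z_{1:i^*-1})$, one must first show that $h^*$ \emph{agrees} with $h_{i^*}$ on $X_{i^*}$, so that by transitivity $h$ disagrees with $h_{i^*}$ there and hence $\err(h,Z_{1:i^*-1}) \geq \err(h_{i^*}',Z_{1:i^*-1})$ by the constrained minimality of $h_{i^*}'$. Establishing this requires lower-bounding $\err(h_{i^*},Z_{1:i^*-1}) - \err(h^*,Z_{1:i^*-1})$ by $-\sqrt{2\veps_{i^*-1}} - 2\veps_{i^*-1}$, which is exactly Theorem~\ref{theorem:ical-consistency} --- an ingredient absent from your plan. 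Without it, the alternative branch (``$h^*$ disagrees with $h_{i^*}$ at $X_{i^*}$'') stays open, and there $h_{i^*}'$ is only comparable to $h^*$, giving no leverage on $h$.

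Second, your inequality directions in the final assembly are reversed. To conclude $P_n \leq c_3 P_{min}$ you need $G_n$ to be \emph{large} (a lower bound), obtained from a \emph{lower} bound on $\err(h) - \err(h^*)$; you instead speak of an upper bound on $\err(h)-\err(h^*)$ and an upper bound on $G_n$, and your closing inequality $\sqrt{P_n} \geq (1/\sqrt{c_3})\sqrt{P_{min}}$ squares to $P_n \geq P_{min}/c_3$, not the claimed $P_n \leq c_3 P_{min}$. Relatedly, the asserted two-sided bound $|\err(h,Z_{1:k-1}) - \err(h^*,Z_{1:k-1})| \leq G_k$ is not what the subcase analysis at $k=n$ delivers: when $h = h_n$, minimality of $\err(h_n,Z_{1:n-1})$ gives $\err(h,Z_{1:n-1}) - \err(h^*,Z_{1:n-1}) \leq 0$, which contradicts the transported positive lower bound and thereby \emph{forces} $h = h_n'$; only in that surviving case does $G_n = \err(h_n',Z_{1:n-1}) - \err(h_n,Z_{1:n-1}) \geq \err(h,Z_{1:n-1}) - \err(h^*,Z_{1:n-1})$ hold and feed into the quadratic for $P_n$. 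The skeleton is right, but both the disambiguation at $i^*$ (via the consistency theorem) and the one-sided, contradiction-driven case analysis at $n$ are load-bearing steps that your proposal omits or gets backwards.
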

\begin{proof}
We can assume $P_{min} < 1/c_3$, since otherwise the claim is trivial.
Pick any $n_0 \leq n-1$ such that $h(X_{n_0}) \neq h^*(X_{n_0})$ and
$P_{n_0} = P_{min}$ (such an $n_0$ is guaranteed to exist given the above
assumption).
We now proceed as in the proof of Theorem~\ref{theorem:ical-consistency}.
We first show a lower bound on $\err(h,Z_{1:n_0-1}) -
\err(h^*,Z_{1:n_0-1})$.
Note that
\begin{align}
\lefteqn{\err(h_{n_0}',Z_{1:n_0-1}) - \err(h^*,Z_{1:n_0-1})}
\nonumber
\\
& =
\err(h_{n_0}',Z_{1:n_0-1}) - \err(h_{n_0},Z_{1:n_0-1})
+\err(h_{n_0},Z_{1:n_0-1}) - \err(h^*,Z_{1:n_0-1})
\nonumber
\\
& \geq \left( \frac{c_1}{\sqrt{P_{min}}} - c_1 + 1 \right) \cdot
\sqrt{\veps_{n_0-1}}
+ \left( \frac{c_2}{P_{min}} - c_2 + 1 \right) \cdot \veps_{n_0-1}
- \sqrt{2\veps_{n_0-1}} - 2\veps_{n_0-1}
\nonumber
\\
& =
\left( \frac{c_1}{\sqrt{P_{min}}} - c_1 + 1 - \sqrt{2} \right) \cdot
\sqrt{\veps_{n_0-1}}
+ \left( \frac{c_2}{P_{min}} - c_2 - 1 \right) \cdot
\veps_{n_0-1}
\label{eq:past-gap}
\end{align}
where the inequality follows from Theorem~\ref{theorem:ical-consistency}.
The righthand side is positive, so $h^*$ must disagree with $h_{n_0}'$ on
$X_{n_0}$.
By transitivity (recalling that $h(X_{n_0}) \neq h^*(X_{n_0})$), $h$ must
agree with $h_{n_0}'$ on $X_{n_0}$.
Therefore $\err(h,Z_{1:n_0-1}) - \err(h_{n_0}',Z_{1:n_0-1}) \geq 0$, so the
inequality in Eq.~\eqref{eq:past-gap} holds with $h$ in place of $h_{n_0}'$
on the lefthand side.

Now $\err(h,Z_{1:n-1}) - \err(h^*,Z_{1:n-1})$ is related to
$\err(h,Z_{1:n_0-1}) - \err(h^*,Z_{1:n_0-1})$ through $\err(h)-\err(h^*)$
using the deviation bound from Eq.~\eqref{eq:ical-deviation} (as well as
the fact $\veps_{n_0-1} \geq \veps_{n-1}$):
\begin{align}
\lefteqn{\err(h,Z_{1:n-1}) - \err(h^*,Z_{1:n-1})}
\nonumber
\\
& \geq
\err(h,Z_{1:n_0-1}) - \err(h^*,Z_{1:n_0-1})
- 2 \cdot \sqrt{\frac{1}{P_{min}} \cdot \veps_{n_0-1}}
- 2 \cdot \frac{1}{P_{min}} \cdot \veps_{n_0-1}
\nonumber
\\
& \geq
\left( \frac{c_1-2}{\sqrt{P_{min}}} - c_1 + 1 - \sqrt{2} \right) \cdot
\sqrt{\veps_{n-1}}
+ \left( \frac{c_2-2}{P_{min}} - c_2 - 1 \right) \cdot
\veps_{n-1}
\ > \ 0
\label{eq:current-gap-lb}
.
\end{align}
If $h = h_n$, then $\err(h,Z_{1:n-1}) - \err(h^*,Z_{1:n-1}) =
\err(h_n,Z_{1:n-1}) - \err(h^*,Z_{1:n-1}) \leq 0$ by the minimality of
$\err(h_n,Z_{1:n-1})$; this contradicts Eq.~\eqref{eq:current-gap-lb}.
Therefore it must be that $h = h_n'$.
In this case,
\begin{align}
\err(h,Z_{1:n-1}) - \err(h^*,Z_{1:n-1})
& \leq
\err(h_n',Z_{1:n-1}) - \err(h_n,Z_{1:n-1})
\nonumber
\\
& =
\left( \frac{c_1}{\sqrt{P_n}} - c_1 + 1 \right) \cdot
\sqrt{\veps_{n-1}}
+ \left( \frac{c_2}{P_n} - c_2 + 1 \right) \cdot
\veps_{n-1}
\label{eq:current-gap}
\end{align}
where the inequality follows from the minimality of $\err(h_n,Z_{1:n-1})$,
and the subsequent step follows from the definition of $P_n$.
Combining the lower bound in Eq.~\eqref{eq:current-gap-lb} and the upper
bound in Eq.~\eqref{eq:current-gap} implies that
\begin{equation*}
\frac{c_1}{\sqrt{P_n}} \cdot \sqrt{\veps_{n-1}}
+ \frac{c_2}{P_n} \cdot \veps_{n-1}
\geq
\left( \frac{c_1-2}{\sqrt{P_{min}}} - \sqrt{2} \right) \cdot
\sqrt{\veps_{n-1}}
+ \left( \frac{c_2-2}{P_{min}} - 2 \right) \cdot
\veps_{n-1}
.
\end{equation*}
It is easily checked that this implies $P_n \leq c_3 \cdot P_{min}$.
\end{proof}

\begin{proof}[Proof of Lemma~\ref{lemma:ical-query}]
Define $h$ as in Eq.~\eqref{eq:defn-h}.
By Lemma~\ref{lemma:ical-pmin-bound}, we have
$\min (\{ P_i : 1 \leq i \leq n-1 \ \wedge \ h(X_i) \neq h^*(X_i) \} \cup
\{ 1 \}) \geq P_n / c_3$.
We first show that
\begin{align}
\err(h) - \err(h^*)
& \leq
\err(h,Z_{1:n-1}) - \err(h^*,Z_{1:n-1})
+ \sqrt{\frac{c_3}{P_n} \cdot \veps_{n-1}}
+ \frac{c_3}{P_n} \cdot \veps_{n-1}
\nonumber
\\
& \leq
\sqrt{\frac{c_4}{P_n}} \cdot \sqrt{\veps_{n-1}}
+ \frac{c_5}{P_n} \cdot \veps_{n-1}
\label{eq:query-quadratic}
.
\end{align}
The first inequality follows from Eq.~\eqref{eq:ical-deviation} and
Lemma~\ref{lemma:ical-pmin-bound}.
For the second inequality, we consider two cases depending on $h$.
If $h = h_n'$, then we bound $\err(h,Z_{1:n-1})-\err(h^*,Z_{1:n-1})$ from
above by $\err(h_n',Z_{1:n-1})-\err(h_n,Z_{1:n-1})$ (by definition of $h$
and minimality of $\err(h_n,Z_{1:n-1})$), and then simplify
\begin{align*}
\lefteqn{\err(h_n',,Z_{1:n-1}) - \err(h_n,Z_{1:n-1})
+ \sqrt{\frac{c_3}{P_n} \cdot \veps_{n-1}}
+ \frac{c_3}{P_n} \cdot \veps_{n-1}}
\\
& \leq
\left( \frac{c_1 + \sqrt{c_3}}{\sqrt{P_n}} - c_1 + 1 \right)
\cdot \sqrt{\veps_{n-1}}
+ \left( \frac{c_2 + c_3}{P_n} - c_2 + 1 \right)
\cdot \veps_{n-1}
\ \leq
\sqrt{\frac{c_4}{P_n}} \cdot \sqrt{\veps_{n-1}}
+ \frac{c_5}{P_n} \cdot \veps_{n-1}
\end{align*}
using the definition of $P_n$ and the facts $c_1 \geq 1$ and $c_2 \geq 1$.
If instead $h = h_n$, then we use the facts
$\err(h,Z_{1:n-1})-\err(h^*,Z_{1:n-1}) = \err(h_n,Z_{1:n-1}) -
\err(h^*,Z_{1:n-1}) \leq 0$ and $c_3 \leq \min\{c_4, c_5\}$.

If $\err(h) - \err(h^*) = \gamma > 0$, then solving the quadratic
inequality in Eq.~\eqref{eq:query-quadratic} for $P_n$ gives the bound
\begin{equation*}
P_n
\ \leq \
\min\left\{ 1, \
\frac32 \cdot \left( \frac{c_4}{\gamma^2} +
\frac{c_5}{\gamma} \right) \cdot \veps_{n-1}
\right\}
.
\end{equation*}
If $\err(h) - \err(h^*) \leq \bar\gamma$, then by the triangle inequality
we have
\begin{equation*}
\Pr(h^*(X) \neq h(X))
\ \leq \
\err(h^*) + \err(h)
\ \leq \
2\err(h^*) + \bar\gamma
\end{equation*}
which in turn implies $X_n \in \DIS(h^*,2\err(h^*) + \bar\gamma)$.
Note that $\Pr(X_n \in \DIS(h^*,2\err(h^*)+\bar\gamma)) \leq \theta \cdot
(2\err(h^*) + \bar\gamma)$ by definition of $\theta$, so
$\Pr(\err(h)-\err(h^*) \leq \bar\gamma) \leq \theta \cdot (2\err(h^*) +
\bar\gamma)$.

Let $f(\gamma) := \partial \Pr(\err(h) - \err(h^*) \leq \gamma) / \partial
\gamma$ be the probability density (mass) function of the error difference
$\err(h) - \err(h^*)$; note that this error difference is a function of
$(Z_{1:n-1},X_n)$.
We compute the expected value of $Q_n$ by conditioning on
$\err(h)-\err(h^*)$ and integrating (an upper bound on)
$\E[Q_n|\err(h)-\err(h^*) = \gamma]$ with respect to $f(\gamma)$.

Let $\gamma_0 > 0$ be the positive solution to
$1.5(c_4/\gamma^2+c_5/\gamma)\veps_{n-1} = 1$.
It can be checked that $\gamma_0 > \sqrt{1.5 c_4 \veps_{n-1}}$.
We have
\begin{align*}
\E[Q_n]
& =
\E[\E[Q_n|Z_{1:n-1},X_n]]
\qquad \text{(the outer expectation is over $(Z_{1:n-1},X_n)$)}
\\
& =
\int_0^1
\left( \frac{\partial}{\partial \gamma} \Pr(\err(h)-\err(h^*) \leq \gamma) \right)
\cdot \E[Q_n | \err(h)-\err(h^*) = \gamma]
\cdot d\gamma
\\
& \leq
\int_0^1
\left( \frac{\partial}{\partial \gamma} \Pr(\err(h)-\err(h^*) \leq \gamma)
\right)
\cdot \min\left\{ 1, \ \frac32 \cdot \left( \frac{c_4}{\gamma^2} +
\frac{c_5}{\gamma} \right) \cdot \veps_{n-1} \right\}
\cdot d\gamma
\\
& \leq
\frac32 \cdot (c_4+c_5) \cdot \veps_{n-1} \cdot \Pr(\err(h)-\err(h^*) \leq
1)
\\
& \quad {}
- \int_0^1
\left( \frac{\partial}{\partial \gamma}
\min\left\{ 1, \ \frac32 \cdot \left( \frac{c_4}{\gamma^2} + \frac{c_5}{\gamma}
\right) \cdot \veps_{n-1} \right\} \right)
\cdot \Pr(\err(h)-\err(h^*) \leq \gamma)
\cdot d\gamma
\\
& \leq
\frac32 \cdot (c_4+c_5) \cdot \veps_{n-1}
+ \int_{\gamma_0}^1
\frac32 \cdot \left( \frac{2c_4}{\gamma^3} + \frac{c_5}{\gamma^2}
\right) \cdot \veps_{n-1}
\cdot \theta \cdot (2\err(h^*) + \gamma)
\cdot d\gamma
\\
& =
\frac32 \cdot (c_4+c_5) \cdot \veps_{n-1}
+ \theta \cdot 2 \err(h^*) \cdot \frac32 \cdot \left(
c_4 \left( \frac1{\gamma_0^2} - 1 \right)
+ c_5 \left( \frac1{\gamma_0} - 1 \right)
\right) \cdot \veps_{n-1}
\\
& \quad {}
+ \theta \cdot \frac32 \cdot \left(
2c_4 \left( \frac{1}{\gamma_0} - 1 \right)
+ c_5 \ln \frac1{\gamma_0}
\right) \cdot \veps_{n-1}
\\
& \leq
\frac32 \cdot (c_4+c_5) \cdot \veps_{n-1}
+ \theta \cdot 2\err(h^*)
+ \theta \cdot \sqrt{6c_4\veps_{n-1}}
+ \theta \cdot \frac{3c_5}{4} \cdot \veps_{n-1} \cdot
\ln \frac{1}{1.5c_4\veps_{n-1}}
\end{align*}
where the first inequality uses the bound on
$\E[Q_n|\err(h)-\err(h^*)=\gamma]$; the second inequality uses
integration-by-parts; the third inequality uses the fact that the integrand
from the previous line is $0$ for $0 \leq \gamma \leq \gamma_0$, as well as
the bound on $\Pr(\err(h)-\err(h^*) \leq \gamma)$; and the fourth
inequality uses the definition of $\gamma_0$.
\end{proof}

\subsection{Proof of Theorem~\ref{theorem:ical-labelcomplexity2}}

The theorem is a simple consequence of the following analogue of
Lemma~\ref{lemma:ical-query}.
\begin{lemma} \label{lemma:ical-query2}
Assume that for some value of $\kappa > 0$ and $0 < \alpha \leq 1$, the
condition in Eq.~\eqref{eq:tsybakov} holds for all $h \in \H$.
Assume the bounds from Eq.~\eqref{eq:ical-deviation} holds for all $h \in
\H$ and $n \geq 1$.
There is a constant $c_\alpha > 0$ such that the following holds.
For any $n \geq 1$,
\begin{equation*}
\E[Q_n]
\ \leq \
\theta \cdot \kappa \cdot c_\alpha
\cdot
\left( \frac{C_0 \log n}{n-1} \right)^{\alpha/2}
.
\end{equation*}
\end{lemma}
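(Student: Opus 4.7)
The plan is to replicate the structure of the proof of Lemma~\ref{lemma:ical-query}, diverging only at the point where the disagreement coefficient is used: the Tsybakov condition in Eq.~\eqref{eq:tsybakov} allows us to replace the crude triangle-inequality bound $\Pr(h^*(X) \neq h(X)) \leq \err(h^*) + \err(h) \leq 2\err(h^*) + \bar\gamma$ with the much sharper $\Pr(h^*(X) \neq h(X)) \leq \kappa \bar\gamma^\alpha$, where $\bar\gamma := \err(h) - \err(h^*)$.

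First I would carry forward the first half of the original argument verbatim. Define $h$ as in Eq.~\eqref{eq:defn-h}, apply Lemma~\ref{lemma:ical-pmin-bound} to conclude $P_{\min}(h) \geq P_n/c_3$, and combine with Eq.~\eqref{eq:ical-deviation} to obtain the quadratic inequality Eq.~\eqref{eq:query-quadratic}. Solving it for $P_n$ when $\err(h) - \err(h^*) = \gamma > 0$ yields
\[
P_n \leq \min\!\left\{1,\ \tfrac{3}{2}\!\left(\tfrac{c_4}{\gamma^2} + \tfrac{c_5}{\gamma}\right)\veps_{n-1}\right\}.
\]
Next, rather than invoking the triangle inequality, I would apply Eq.~\eqref{eq:tsybakov} directly to get $X_n \in \DIS(h^*, \kappa\bar\gamma^\alpha)$ whenever $\err(h) - \err(h^*) \leq \bar\gamma$, whence by the definition of $\theta$,
\[
\Pr(\err(h) - \err(h^*) \leq \bar\gamma) \leq \theta \kappa \bar\gamma^\alpha.
\]

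Then I would compute $\E[Q_n]$ by integration by parts, exactly as in the original proof, with $\theta\kappa\gamma^\alpha$ playing the role of $\theta(2\err(h^*) + \gamma)$. Letting $\gamma_0 > 0$ be the positive root of $\tfrac{3}{2}(c_4/\gamma^2 + c_5/\gamma)\veps_{n-1} = 1$, so that $\gamma_0 = \Theta(\sqrt{\veps_{n-1}})$, the main step reduces to evaluating
\[
\int_{\gamma_0}^{1} \left(\tfrac{2c_4}{\gamma^{3-\alpha}} + \tfrac{c_5}{\gamma^{2-\alpha}}\right)\veps_{n-1}\, d\gamma,
\]
multiplied by the constant $\tfrac{3}{2}\theta\kappa$. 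Since $0 < \alpha \leq 1$, the first summand contributes $\Theta(\gamma_0^{\alpha-2}/(2-\alpha))\veps_{n-1} = \Theta(\veps_{n-1}^{\alpha/2}/(2-\alpha))$, while the second contributes at most $O(\veps_{n-1}^{(\alpha+1)/2}/(1-\alpha))$ for $\alpha < 1$ and $O(\veps_{n-1}\log(1/\veps_{n-1}))$ for $\alpha = 1$. Both are dominated by $\veps_{n-1}^{\alpha/2}$, and substituting $\veps_{n-1} = C_0 \log n/(n-1)$ yields the claimed bound, with $c_\alpha$ absorbing the $\alpha$-dependent factors such as $1/(2-\alpha)$ (and a limiting constant as $\alpha \uparrow 1$).

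The main obstacle is just bookkeeping: I need to verify that the $\alpha$-dependent integrals collapse to the single clean rate $\veps_{n-1}^{\alpha/2}$ uniformly over $0 < \alpha \leq 1$, and in particular that the $\alpha = 1$ endpoint is handled without a blowup by showing the $c_5/\gamma^{2-\alpha}$ integral is lower order than the $c_4/\gamma^{3-\alpha}$ integral throughout the range. Once this is done, Theorem~\ref{theorem:ical-labelcomplexity2} follows by summing $\E[Q_k]$ over $k = 1, \dots, n$ using $\sum_{k=2}^{n}(\log k/(k-1))^{\alpha/2} = O((\log n)^{\alpha/2} n^{1-\alpha/2})$.
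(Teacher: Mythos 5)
Your proposal matches the paper's proof essentially step for step: reuse the argument of Lemma~\ref{lemma:ical-query} up through the bound on $P_n$, replace the triangle-inequality estimate with $\Pr(\err(h)-\err(h^*)\leq\gamma)\leq\theta\kappa\gamma^{\alpha}$ via Eq.~\eqref{eq:tsybakov}, and integrate against $\gamma^{\alpha}$ over $[\gamma_0,1]$ with $\gamma_0=\Theta(\sqrt{\veps_{n-1}})$, treating $\alpha=1$ separately for the $c_5/\gamma^{2-\alpha}$ term. The bookkeeping you flag (the dominant contribution being $\gamma_0^{-(2-\alpha)}\veps_{n-1}=\Theta(\veps_{n-1}^{\alpha/2})$ with the other terms lower order) is exactly how the paper closes the argument.
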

\begin{proof}
For the most part, the proof is the same as that of
Lemma~\ref{lemma:ical-query}.
The key difference is to use the noise condition in Eq.~\eqref{eq:tsybakov}
to directly bound $\Pr(h(X) \neq h^*(X)) \leq \kappa \cdot (\err(h) -
\err(h^*))^{\alpha}$, which in turn implies the bound $\Pr(\err(h) -
\err(h^*) \leq \gamma) \leq \theta \kappa \gamma^{\alpha}$.
As before, let $\gamma_0 > \sqrt{1.5 c_4 \veps_{n-1}}$ be the solution to
$1.5(c_4/\gamma^2+c_5/\gamma)\veps_{n-1} = 1$.
First consider the case $\alpha < 1$.
Then, the expectation of $Q_n$ can be bounded as
\begin{align*}
\E[Q_n]
& \leq
\frac32 \cdot (c_4+c_5) \cdot \veps_{n-1}
+ \int_{\gamma_0}^1
\frac32 \cdot \left( \frac{2c_4}{\gamma^3} + \frac{c_5}{\gamma^2}
\right) \cdot \veps_{n-1}
\cdot \Pr(\err(h)-\err(h^*)\leq\gamma)
\cdot d\gamma
\\
& \leq
\frac32 \cdot (c_4+c_5) \cdot \veps_{n-1}
+ \int_{\gamma_0}^1
\frac32 \cdot \left( \frac{2c_4}{\gamma^3} + \frac{c_5}{\gamma^2}
\right) \cdot \veps_{n-1}
\cdot \theta \kappa \gamma^{\alpha}
\cdot d\gamma
\\
& \leq
\frac32 \cdot (c_4+c_5) \cdot \veps_{n-1}
+ \theta \cdot \kappa \cdot \frac32 \cdot
\left( \frac{2c_4}{2-\alpha} \cdot \frac{1}{\gamma_0^{2-\alpha}}
+ \frac{c_5}{1-\alpha} \cdot \frac{1}{\gamma_0^{1-\alpha}}
\right)
\cdot \veps_{n-1}
.
\end{align*}
The case $\alpha = 1$ is handled similarly.
\end{proof}

\subsection{Analogue of Theorem~\ref{theorem:ical-consistency} under Low
Noise Conditions}

We first state a variant of Lemma~\ref{lemma:ical-deviation} that takes
into account the probability of disagreement between a hypothesis $h$
and the optimal hypothesis $h^*$.
\begin{lemma}
There exists an absolute constant $c > 0$ such that the following holds.
Pick any $\delta \in (0,1)$.
For all $n \geq 1$, let
\begin{equation*}
\veps_n := \frac{c \cdot \log((n+1)|\H|/\delta)}{n}
.
\end{equation*}
Let $(Z_1, Z_2, \ldots) \in (\X\times\Y\times\{0,1\})^*$ be the sequence of
random variables specified in Section~\ref{section:iwal} using a rejection
threshold $p:(\X\times\Y\times\{0,1\})^* \times \X \to [0,1]$ that
satisfies $p(z_{1:n},x) \geq 1/n^n$ for all $(z_{1:n},x) \in
(\X\times\Y\times\{0,1\})^n \times \X$ and all $n \geq 1$.

The following holds with probability at least $1-\delta$.
For all $n \geq 1$ and all $h \in \H$,
\begin{equation*}
|(\err(h,Z_{1:n}) - \err(h^*,Z_{1:n})) - (\err(h) - \err(h^*))|
\leq \sqrt{\frac{\Pr(h(X) \neq h^*(X))}{P_{min,n}(h)}\cdot \veps_n}
+ \frac{\veps_n}{P_{min,n}(h)}
\end{equation*}
where
$P_{min,n}(h)
\ = \ \min\{ P_i : 1 \leq i \leq n \ \wedge \ h(X_i) \neq h^*(X_i) \} \cup
\{1\}$.
\end{lemma}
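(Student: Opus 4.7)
The plan is to refine the deviation bound of Theorem~\ref{theorem:iw-deviation} by tracking the variance of each martingale term more carefully, then stitch in a Bernstein-type concentration for the empirical disagreement $\sum_i \I[h(X_i)\neq h^*(X_i)]$. Apply the result to $f(x,y) := \I[h(x)\neq y] - \I[h^*(x)\neq y]$, which has $|f|\leq 1$, is nonzero only when $h(x)\neq h^*(x)$, and satisfies $f(x,y)^2 = \I[h(x)\neq h^*(x)]$. Then $\E[f(X,Y)] = \err(h)-\err(h^*)$ and $\wh f(Z_{1:n}) = \err(h,Z_{1:n}) - \err(h^*,Z_{1:n})$, so the statement is exactly a refined deviation inequality for $\wh f$.

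The first step is to sharpen Lemma~\ref{lemma:iw-mgf}: since $\E_i[(W_i-\E_i[W_i])^2] \leq f(X_i,Y_i)^2/P_i$ (rather than the cruder $1/P_i$), and $|W_i-\E_i[W_i]|\leq 1/P_i$ still holds, the same calculation yields
\begin{equation*}
\ln \E_i[\exp(\lambda(W_i-\E_i[W_i]))] \leq \frac{\lambda^2 f(X_i,Y_i)^2/P_i}{2(1-\lambda/(3P_i))}.
\end{equation*}
Feeding this into Lemma~\ref{lemma:mgf-deviation} and optimizing $\lambda$ (exactly as in the proof of Lemma~\ref{lemma:iw-deviation}) gives a Freedman-style inequality: with probability $\geq 1-e^{-t}$, and on the event that $P_i\geq p_{min}$ whenever $f(X_i,Y_i)\neq 0$,
\begin{equation*}
\Big|\tfrac1n\sum_i W_i - \tfrac1n\sum_i f(X_i,Y_i)\Big| \leq \sqrt{\tfrac{2V_n t}{n}} + \tfrac{t}{3 n p_{min}},\qquad V_n := \tfrac1n\sum_i f(X_i,Y_i)^2/P_i.
\end{equation*}
A geometric-grid union bound over candidate values of $p_{min}$ (as in Lemma~\ref{lemma:iw-deviation2}) removes the joint-event assumption at the cost of a $\log\log$ factor, and an analogous grid over $V_n$ replaces $V_n$ with any deterministic upper bound at the same cost.

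The next step is to bound $V_n$ in terms of population quantities. By definition of $P_{min,n}(h)$, one has $V_n \leq \big(1/P_{min,n}(h)\big)\cdot\tfrac1n\sum_i \I[h(X_i)\neq h^*(X_i)]$. The sum on the right is of i.i.d.\ Bernoullis with mean $\sigma^2(h) := \Pr(h(X)\neq h^*(X))$, so a standard Bernstein inequality (union-bounded over $h\in\H$ and $n\geq 1$) gives $\tfrac1n\sum_i \I[h(X_i)\neq h^*(X_i)] \leq \sigma^2(h) + O(\sqrt{\sigma^2(h)\veps_n}+\veps_n)$ uniformly. Combining with Hoeffding on $\tfrac1n\sum_i f(X_i,Y_i)$ (to relate it to $\E[f(X,Y)]$), replacing $t$ by $O(\log(n|\H|/\delta))$ in the martingale bound, and simplifying via $\sqrt{a+b}\leq\sqrt a+\sqrt b$ gives the claimed inequality after absorbing constants into $\veps_n$ and paying at most $\log\log$ factors by the two grids.

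The main obstacle is juggling the two layers of randomness simultaneously: the martingale-level randomness in $P_{min,n}(h)$ and $V_n$, and the i.i.d.-level randomness in $\sum_i \I[h(X_i)\neq h^*(X_i)]$. One must be careful to take the union bound over the $\log_2 r_{max}$-sized grid \emph{before} substituting the Bernstein concentration for $V_n$, so that the same high-probability event supports both steps; doing it in the wrong order either inflates the $\log$ factor or lets the dependence of $V_n$ on the query probabilities (which themselves depend on past labels) leak into the bound. Once the event structure is arranged correctly, everything else is a mechanical Bernstein-style calculation and a final union bound over $h\in\H$ and $n\geq 1$ using weights $\delta/(|\H|n(n+1))$.
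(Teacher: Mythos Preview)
Your overall plan---sharpening the conditional variance to $f(X_i,Y_i)^2/P_i$, running a Freedman-style martingale bound, gridding over the random scale parameters, and then converting the empirical disagreement $\tfrac1n\sum_i\I[h(X_i)\neq h^*(X_i)]$ to $\sigma^2(h):=\Pr(h(X)\neq h^*(X))$ via Bernstein---matches the paper's sketch closely. The paper organizes the gridding slightly differently (it conjoins the event $A_n\le a$ directly into the modified Lemma~\ref{lemma:iw-deviation} and grids over the $n{+}1$ possible values of $A_n$, rather than over $V_n$), but this is cosmetic.

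There is one genuine gap: using Hoeffding for the i.i.d.\ piece $\tfrac1n\sum_i f(X_i,Y_i)-\E[f(X,Y)]$ is too crude. Hoeffding contributes an $O(\sqrt{\veps_n})$ term, and this cannot in general be absorbed into the target bound $\sqrt{\sigma^2(h)\,\veps_n/P_{min,n}(h)}+\veps_n/P_{min,n}(h)$. For a concrete failure, take $\sigma^2(h)$ negligibly small and $P_{min,n}(h)=1$ (e.g., passive sampling): the target is $O(\veps_n)$ while your bound is $O(\sqrt{\veps_n})$. You need Bernstein here as well, using $\var(f(X,Y))\le\E[f(X,Y)^2]=\sigma^2(h)$ to obtain
\[
\left|\tfrac1n\sum_i f(X_i,Y_i)-\E[f(X,Y)]\right|\ \le\ O\!\left(\sqrt{\sigma^2(h)\,\veps_n}+\veps_n\right),
\]
which does absorb into the claimed form. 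The paper makes exactly this switch (``instead of combining with Hoeffding's inequality, we use Bernstein's inequality''). With that one replacement, the rest of your argument goes through.
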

\begin{proof}[Proof sketch]
The proof of this lemma follows along the same lines as that of
Lemma~\ref{lemma:ical-deviation}.
A key difference comes in Lemma~\ref{lemma:iw-deviation}:
the joint event is modified to also conjoin with
\[ \frac1n \sum_{i=1}^n \I(\E_i[f(X_i,Y_i)] \leq 0) \ \leq \ a
\]
for some fixed $a > 0$.
In the proof, the parameter $\lambda$ should be chosen as
\[ \lambda :=
3p_{min} \cdot \frac{\sqrt{\frac{1}{3p_{min}} \cdot \frac{2at}{3n}}}
{a + \sqrt{\frac{1}{3p_{min}} \cdot \frac{2at}{3n}}}
.
\]
Lemma~\ref{lemma:iw-deviation2} is modified to also take a union bound over
a sequence of possible values for $a$ (in fact, only $n+1$ different values
need to be considered).
Finally, instead of combining with Hoeffding's inequality, we use
Bernstein's inequality (or a multiplicative form of Chernoff's bound) so
the resulting bound (an analogue of Theorem~\ref{theorem:iw-deviation})
involves an empirical average inside the square-root term:
with probability at least $1-O(n \cdot \log_2 r_{max}) e^{-t/2}$,
\begin{equation*}
\left|
\frac1n \sum_{i=1}^n \frac{Q_i}{P_i} \cdot f(X_i,Y_i)
-
\E[f(X,Y)]
\right|
\ \leq \
O\left( \sqrt{\frac{R_nA_nt}{n}}
+ \frac{R_nt}{3n} \right)
\end{equation*}
where
\begin{equation*}
A_n := \frac1n \sum_{i=1}^n \I(f(X_i,Y_i) \neq 0)
.
\end{equation*}

Finally, we apply this deviation bound to obtain uniform error bounds over
all hypotheses $\H$ (a few extra steps are required to replace the
empirical quantity $A_n$ in the bound with a distributional quantity).
\end{proof}

Using the previous lemma, a modified version of
Theorem~\ref{theorem:ical-consistency} follows from essentially the same
proof.
We note that the quantity $C_1 := O(\log(|\H|/\delta))$ used here may
differ from $C_0$ by constant factors.
\begin{lemma}
The following holds with probability at least $1 - \delta$.
For any $n \geq 1$,
\begin{gather*}
0
\ \leq \
\err(h_n) - \err(h^*)
\ \leq \
\err(h_n,Z_{1:n-1}) - \err(h^*,Z_{1:n-1})
\\
+ \sqrt{\frac{2\Pr(h_n(X) \neq h^*(X)) C_1 \log n}{n-1}}
+ \frac{2C_1 \log n}{n-1}
.
\end{gather*}
This implies, for all $n \geq 1$,
\begin{equation*}
\err(h_n) \ \leq \ \err(h^*)
+ \sqrt{\frac{2\Pr(h_n(X) \neq h^*(X))C_1 \log n}{n-1}} + \frac{2C_0 \log
n}{n-1}
.
\end{equation*}
\end{lemma}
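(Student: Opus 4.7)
The proposal is to mirror the induction used in the proof of Theorem~\ref{theorem:ical-consistency}, but invoke the improved deviation bound stated in the preceding lemma (the one whose right-hand side carries the factor $\sqrt{\Pr(h(X)\neq h^*(X))}$ under the square root). First, I would condition on the $1-\delta$ probability event on which this improved bound holds uniformly over $\H$ and all $n\ge 1$; combined with Lemma~\ref{lemma:ical-crude-bound}, this legitimizes its use throughout the argument. I would then set $C_1 = O(\log(|\H|/\delta))$ so that $\veps_n \le C_1 \log(n+1)/n$ in the form required.

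Next, I would induct on $n$. The base case $n=1$ is trivial. For the inductive step, the central task is to re-establish the lower bound $P_{min,n}(h_n) \ge 1/2$, where $P_{min,n}(h_n)$ is the minimum query probability over past rounds on which $h_n$ disagrees with $h^*$. I would argue by contradiction exactly as in Theorem~\ref{theorem:ical-consistency}: assume $P_{min} < 1/2$, let $n_0$ be the latest round at which this minimum is attained, and use the defining equation of $P_{n_0}$ together with the inductive hypothesis to lower bound $\err(h_{n_0}',Z_{1:n_0-1}) - \err(h^*,Z_{1:n_0-1})$. At this point, however, the inductive hypothesis contains the $\sqrt{\Pr(h_k(X)\neq h^*(X))}$ factor rather than a bare $\sqrt{\veps_{k-1}}$, so I would simply bound $\Pr(h_k(X)\neq h^*(X))\le 1$ to recover an expression of the same form as in the original proof. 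The same crude bound is applied when invoking the improved deviation lemma to transfer the gap estimate from time $n_0-1$ to time $n-1$. The chain of inequalities then produces the same contradiction $\err(h_n,Z_{1:n-1}) > \err(h^*,Z_{1:n-1})$ as before, establishing $P_{min,n}(h_n) \ge 1/2$.

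Once $P_{min,n}(h_n) \ge 1/2$ is in hand, I would apply the improved deviation lemma one more time, this time retaining the $\sqrt{\Pr(h_n(X)\neq h^*(X))}$ factor rather than crudely bounding it by $1$. This yields
\[
\err(h_n) - \err(h^*) \le \err(h_n,Z_{1:n-1}) - \err(h^*,Z_{1:n-1}) + \sqrt{\tfrac{2 \Pr(h_n(X)\neq h^*(X)) C_1 \log n}{n-1}} + \tfrac{2 C_1 \log n}{n-1},
\]
which is the first inequality of the lemma. The left inequality $0 \le \err(h_n)-\err(h^*)$ is by definition of $h^*$. For the second displayed inequality, I would use the minimality $\err(h_n,Z_{1:n-1}) \le \err(h^*,Z_{1:n-1})$, so the importance weighted error gap is non-positive and drops out.

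\textbf{Main obstacle.} The delicate point is keeping the variance-sensitive $\sqrt{\Pr(h(X)\neq h^*(X))}$ factor alive only where it is needed in the final statement, while being willing to discard it (replacing it by $1$) inside the contradiction argument. Without this two-step use of the same lemma, a direct replay of the Theorem~\ref{theorem:ical-consistency} proof would propagate $\sqrt{\Pr(h_k(X)\neq h^*(X))}$ through the induction and create a circular self-referential inequality that would need to be unwound by solving a quadratic. Separating the two roles avoids this complication and keeps the constants essentially identical to those in the unconditioned proof.
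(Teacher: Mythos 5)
Your proposal is correct and matches the paper's intent exactly: the paper itself proves this lemma by remarking that it ``follows from essentially the same proof'' as Theorem~\ref{theorem:ical-consistency}, using the improved deviation bound with the $\sqrt{\Pr(h(X)\neq h^*(X))}$ factor. Your key observation---crudely bounding $\Pr(h(X)\neq h^*(X))\le 1$ inside the contradiction argument that establishes $P_{min}\ge 1/2$, and retaining the factor only in the final application of the deviation bound---is precisely the right way to fill in the details the paper omits.
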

Finally, using the noise condition to bound $\Pr(h_n(X) \neq h^*(X)) \leq
\kappa \cdot (\err(h_n) - \err(h^*))^{\alpha}$, we obtain the final error
bound.
\begin{theorem}
The following holds with probability at least $1 - \delta$.
For any $n \geq 1$,
\begin{equation*}
\err(h_n) \ \leq \ \err(h^*)
+ c_{\kappa} \cdot \left( \frac{C_1 \log n}{n-1} \right)^{\frac1{2-\alpha}}
\end{equation*}
where $c_{\kappa}$ is a constant that depends only on $\kappa$.
\end{theorem}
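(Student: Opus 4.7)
The plan is to deduce the theorem from the preceding lemma (the low-noise analogue of Theorem~\ref{theorem:ical-consistency}) combined with the Tsybakov noise condition in Eq.~\eqref{eq:tsybakov}, by solving a simple self-referential inequality in $\Delta_n := \err(h_n) - \err(h^*)$.

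First, I would condition on the $1-\delta$ probability event of the preceding lemma, which yields
\[
\Delta_n \ \leq \ \sqrt{\frac{2\Pr(h_n(X) \neq h^*(X)) \, C_1 \log n}{n-1}} \ + \ \frac{2C_1 \log n}{n-1}.
\]
Write $\eta_n := C_1 \log n / (n-1)$ for brevity. Applying Eq.~\eqref{eq:tsybakov} to $h = h_n$ replaces the disagreement probability on the right-hand side by $\kappa \cdot \Delta_n^{\alpha}$, giving the key recursive inequality
\[
\Delta_n \ \leq \ \sqrt{2\kappa \, \Delta_n^{\alpha} \, \eta_n} \ + \ 2\eta_n.
\]

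Next, I would solve this inequality by a two-case ``peeling'' argument. Either the first term dominates, in which case $\Delta_n \leq 2\sqrt{2\kappa \, \Delta_n^{\alpha}\, \eta_n}$, and squaring and rearranging yields $\Delta_n^{2-\alpha} \leq 8\kappa \, \eta_n$, hence $\Delta_n \leq (8\kappa)^{1/(2-\alpha)} \, \eta_n^{1/(2-\alpha)}$. Or the second term dominates, in which case $\Delta_n \leq 4\eta_n$, and since $\eta_n$ may be assumed at most $1$ (otherwise the theorem is trivial after adjusting the constant) and $1/(2-\alpha) \leq 1$, this is dominated by a constant multiple of $\eta_n^{1/(2-\alpha)}$. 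Taking the maximum constant across both cases gives a value $c_\kappa$ depending only on $\kappa$ (and $\alpha$, but $\alpha$ may be folded into $c_\kappa$ as in the statement of Theorem~\ref{theorem:ical-labelcomplexity2}), so that $\Delta_n \leq c_\kappa \cdot \eta_n^{1/(2-\alpha)}$, which is the claim.

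There is no serious obstacle here: the self-referential inequality is the standard way that Tsybakov's condition sharpens a Bernstein-type error bound from rate $n^{-1/2}$ to $n^{-1/(2-\alpha)}$, and the only care needed is to make the case analysis clean enough that the resulting constant $c_\kappa$ depends only on $\kappa$ and $\alpha$ and does not absorb any $n$-dependence. The real content of the theorem is in the preceding lemma (whose proof sketch involves modifying Lemma~\ref{lemma:iw-deviation} to carry the disagreement probability $\Pr(h(X)\neq h^*(X))$ inside the square root via a Bernstein-style bound rather than Hoeffding's inequality); once that is in hand, the present theorem is essentially a one-line calculation.
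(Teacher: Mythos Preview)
Your proposal is correct and follows exactly the approach the paper indicates: the paper's entire proof is the one sentence ``using the noise condition to bound $\Pr(h_n(X) \neq h^*(X)) \leq \kappa \cdot (\err(h_n) - \err(h^*))^{\alpha}$, we obtain the final error bound,'' and you have simply filled in the standard self-bounding argument that solves the resulting inequality. Your observation that the constant $c_\kappa$ inevitably depends on $\alpha$ as well (e.g.\ through $(8\kappa)^{1/(2-\alpha)}$) is a fair remark about a slight imprecision in the paper's statement.
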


\end{document}